\documentclass[10pt,twocolumn,letterpaper]{article}

\usepackage{cvpr}
\usepackage{times}
\usepackage{epsfig}
\usepackage{graphicx}
\usepackage{amsmath}
\usepackage{amssymb}

\usepackage[font={footnotesize},up,bf]{caption}

\usepackage{subfig}

\usepackage[table]{xcolor}
\usepackage{multirow}

\usepackage{arydshln}
\usepackage{xspace}

\usepackage[linesnumbered, algoruled, vlined]{algorithm2e}
\usepackage{eucal,bibspacing,cite,verbatim}

\usepackage{amsthm}
\usepackage{proof}

\def\fasth{{FastHash}\xspace}

\newtheorem{proposition}{Proposition}

\def\best{\bf \cellcolor[gray]{0.9}}

\def\sign{\operatorname*{sign\,}}
\def\ssum{\textstyle\sum}

\def\x{{\boldsymbol x}}
\def\w{{\boldsymbol w}}

\def\z{{\boldsymbol z}}

\def\bh{{\Phi}}

\def\bT{{\bf  T}}
\def\tree{{T}}

\def\Real{\mathbb{R}}

\def\Z{{\bf  Z}}
\def\Y{{\bf  Y}}

\def\bY{{\bf  Y}}

\def\block{{\mathcal{B}}}

\DeclareMathOperator{\Ucal}{\mathcal{U}}
\DeclareMathOperator{\Vcal}{\mathcal{V}}

\DeclareMathOperator{\Xcal}{\mathcal{X}}

\addtolength{\abovecaptionskip}{-.3cm}
\addtolength{\belowcaptionskip}{-.4cm}
\addtolength{\parskip}{-0.03cm}

\addtolength{\textfloatsep}{-0.4cm}
\addtolength{\floatsep}{-0.4cm}

\expandafter\def\expandafter\normalsize\expandafter{%
\normalsize\setlength\abovedisplayskip{3pt}}

\expandafter\def\expandafter\normalsize\expandafter{%
\normalsize\setlength\belowdisplayskip{3pt}}

\usepackage[pagebackref=true,breaklinks=true,letterpaper=true,colorlinks,bookmarks=false]{hyperref}

\cvprfinalcopy %

\ifcvprfinal\pagestyle{empty}\fi
\begin{document}

\title{Fast Supervised Hashing with Decision Trees for High-Dimensional Data}

\author{Guosheng Lin,
~ Chunhua Shen\thanks{Corresponding should be addressed to C. Shen.
    Code is available at:
   \url{https://bitbucket.org/chhshen/fasthash/}.},
~ Qinfeng Shi,
~ Anton van den Hengel,
~ David Suter\\
The University of Adelaide, SA 5005, Australia\\
}

\maketitle

\thispagestyle{empty}

\begin{abstract}

Supervised hashing aims to map the original features to compact binary codes that are
able to preserve label based similarity in the Hamming space.
Non-linear hash functions have demonstrated their advantage over
linear ones due to their powerful generalization capability.
In the literature, kernel functions are typically used to achieve non-linearity in hashing,
which achieve encouraging retrieval performance at the price of slow evaluation and training time.
Here we propose to use boosted decision trees for achieving non-linearity in hashing,
which are fast to train and evaluate, hence more suitable for hashing with high dimensional data.
In our approach, we first propose sub-modular formulations for the hashing binary code
inference problem and an efficient GraphCut based block search method for solving
large-scale inference.
Then we learn hash functions by training boosted decision trees to fit the binary codes.
Experiments demonstrate that our proposed method significantly outperforms most state-of-the-art methods
in retrieval precision and training time.
Especially for high-dimensional data, our method is orders of magnitude faster than many
methods in terms of training time.

\end{abstract}

\section{Introduction}

Hashing methods construct a set of hash
functions that map the original features into
compact binary codes.
Hashing enables fast search by using look-up tables
or hamming distance based ranking.
Moreover, compact binary
codes are extremely efficient for large-scale data storage.
Applications include image retrieval,
large-scale object detection \cite{fastdection} and so on.

Hashing methods aim to preserve some notion of
similarity (or distance) in the Hamming space.
These methods
can be roughly categorized as supervised and unsupervised.
Unsupervised hashing methods
try to preserve the similarity in the original feature space. For example, Locality-Sensitive Hashing (LSH)
\cite{Gionis1999} randomly generates linear hash functions to
approximate cosine similarity;
Spectral Hashing \cite{MDSH} learns eigenfunctions that preserve Gaussian affinity;
Iterative Quantization (ITQ) \cite{gong2012iterative} approximates the Euclidean distance in the Hamming space; and
Hashing on manifolds \cite{CVPR13aShen} takes the intrinsic manifold structure into consideration.

Supervised hashing is designed to preserve some label-based similarity \cite{KSH,TSH,kulis2009learning,cgh}.
This might take place, for example, in the case where
images from the same category are defined as being semantically similar to each other.
Supervised hashing has received increasingly attention recently
such as Supervised Hashing with Kernels (KSH) \cite{KSH}, Two-Step Hashing (TSH) \cite{TSH},
Binary Reconstructive embeddings (BRE) \cite{kulis2009learning}.
Although supervised hashing is more flexible and appealing for real-world applications, the learning is usually
much slower than that of unsupervised hashing.
Despite the fact that hashing is only of practical interest in the case where it may be applied to large numbers of high-dimensional features, most supervised hashing approaches are demonstrated only on relatively small numbers of low dimensional features.
For example, codebook based features have achieved remarkable success on image classification \cite{coates2011importance, kiros12}, of which the number of feature dimension usually comes to tens of thousands.
To exploit this recent advance of feature learning,
it is very desirable for supervised hashing to be able to deal with
large-scale data efficiently on sophisticated high-dimensional features.
To bridge this gap, we propose a supervised hashing method which is able to leverage large training sets
and efficiently incorporate with high-dimensional features.

Non-linear hash functions, e.g., the kernel hash function employed in KSH and TSH, have shown much improved performance over the linear hash function. %
However, kernel functions could be extremely expensive for both training and testing on high-dimensional features.
Thus a scalable supervised hashing method with non-linear hash functions is desirable too.

Our main { contributions} are as follows.
{(i)} We propose to use (ensembles of) decision trees as hash functions for supervised hashing,
    which can easily deal with a very large number of training data with high dimensionality (tens of thousands),
    and has the desirable non-linear mapping. To  our knowledge, our method is the first general
    hashing method that uses decision trees as hash functions.
{(ii)} In order to efficiently learn decision trees for supervised hashing,
    we apply a two-step learning strategy which decomposes the learning into the binary code inference and the simple binary classification training of decision trees.
For binary code inference, we propose sub-modular formulations and an efficient GraphCut \cite{boykov2001fast} based block search method for solving large-scale inference.
{(iii)} Our method significantly outperforms many state-of-the-art methods in terms of retrieval precision.
For high-dimensional data, our method is usually orders of magnitude faster
in terms of training time.

The two-step learning strategy employed in our method is inspired by the recent work of TSH \cite{TSH}.
Other work
in \cite{shakhnarovich2003fast,torralba2008small,rastegari2012attribute} also learns hash functions by training classifiers.
The spectral method in TSH for binary code inference does not scale
well on large training data,
     and it may also lead to inferior result due to the loose relaxation of spectral methods.
     Moreover, TSH only demonstrates satisfactory performance with kernel hash functions
     on small-scale training data with low dimensionality,
     which is clearly not practical for large-scale learning on high-dimensional features.
    In contrast with TSH, we explore efficient decision trees as hash functions and
    propose an efficient GraphCut based method for binary code inference.
    Experiments show that our method significantly outperforms TSH.

\begin{algorithm}[t!]

	\SetAlFnt{\footnotesize}

	\footnotesize{

	\caption{\footnotesize An example for constructing blocks}

	\label{alg:block}
	    \KwIn{Training data points: $\{\x_1,...\x_n\}$; Affinity matrix: $\Y$.}
		\KwOut{blocks:$\{ \block_1, \block_2, ...\}$}     $\Vcal \leftarrow \{\x_1,...,\x_n\}$; $t=0$\;
		\Repeat{$\Vcal = \emptyset$}     {
			$t=t+1$; $\block_t \leftarrow \emptyset$; $\x_i$: randomly selected from	$\Vcal$\;
			initialize $\Ucal$ as joint of $\Vcal$ and similar examples of $\x_i$ \;
			\For{each $\x_j$ in $\Ucal$}{
				\If{$\x_j$ is not dissimilar with any examples in $\block_t$}{ add $\x_j$ to $\block_t$;
				remove $\x_j$ from $\Vcal$ \;}
			}
		}
	}

\end{algorithm}

\section{The proposed method}

Let $\Xcal=\{\x_1, ..., \x_n\} \subset \Real^d$ denote a set of training points.
Label based similarity information is described by an affinity matrix: $\bY$, which is the ground truth for supervised learning.
The element in $\bY$: $y_{ij}$ indicates the similarity of two data point $\x_i$ and $\x_j$; and $y_{ij}=y_{ji}$.
Specifically, $y_{ij}=1$ if two data points are similar, $y_{ij}=-1$ if dissimilar (irrelevant)
and $y_{ij}=0$ if the pairwise relation is undefined.
We aim to learn a set of hash functions to preserve the label based similarity in the Hamming space.
$m$ hash functions are denoted as: $\bh(\x)=[h_1(\x),
..., h_m( \x  )]$.
	The output of hash functions are $m$-bit binary codes: $\bh(\x) \in \{-1,1\}^m$.
Closely related to Hamming distance, the Hamming affinity is calculated by the inner product of two binary codes : $s(\x_i, \x_j)=\sum_{k=1}^mh_k(\x_i)h_k(\x_j)$. Similar to KSH \cite{KSH}, we formulate hashing learning based on Hamming affinity, which is to encourage positive affinity value of similar data pairs and negative for dissimilar data pairs. The optimization is written as:
 \begin{align}
 	\label{eq:opt_main}
	\min_{\bh(\cdot)} \sum_{i=1}^n\sum_{j=1}^n
     |y_{ij}| \biggr[m y_{ij} - \sum_{k=1}^m h_k(\x_i)h_k(\x_j) \biggr]^2.
 \end{align}
Note that KSH does not include the multiplication of $|y_{ij}|$ in the objective.
We use $|y_{ij}|$ to prevent undefined pairwise relation from harming the hashing task.
If the relation is undefined, $|y_{ij}|=0$, otherwise, $|y_{ij}|=1$.
In contrast to KSH which uses kernel functions, here we employ decision trees as hash functions.
We define each hash function as a linear combination of decision trees, that is,
\begin{align}
 	\label{eq:btree}
	h(\x)=\sign(\ssum_{q=1}^Q  w_q \tree_q(\x)).
 \end{align}
Here $Q$ is the number of decision trees.
$\tree(\cdot) \in \{-1, 1\}$ denotes a tree function with binary output;
The weighting $\w =[ w_1, ..., w_Q]$ and trees $\bT = [\tree_1, ..., \tree_Q]$ are parameters we need to learn for one hash function. Comparing to kernel method, decision trees enjoy faster testing on high-dimensional data as well as the non-linear fitting ability.

Optimizing \eqref{eq:opt_main} directly for learning decision trees is difficult, and the
technique used in KSH is no longer applicable.
Inspired by TSH \cite{TSH}, we introduce auxiliary variables $z_{k,i} \in \{-1, 1\}$
as the output of the $k$-th hash function on $\x_i$: $z_{k,i}=h_k(\x_i)$.
Clearly,
$z_{k,i}$ is the binary code of $i$-th data point in the $k$-th bit.
With these auxiliary variables, the problem \eqref{eq:opt_main} can be decomposed into two
sub-problems:
\begin{subequations}
\begin{align}
	\min_{\Z \in \{-1, 1\}^{ m \times n}} & \sum_{i=1}^n\sum_{j=1}^n
     |y_{ij}| \biggr(m y_{ij} - \sum_{k=1}^m z_{k, i}z_{k, j} \biggr)^2; \label{eq:opt_step1}\\
    \min_{\bh(\cdot)} \;\; & \ssum_{k=1}^{m} \ssum_{i=1}^n \delta( z_{k,j}=h_k(\x_i) ). \label{eq:opt_step2_tmp}
\end{align}
\end{subequations}
Here $\Z$ is the matrix of $m$-bit binary codes for all training data points.
Note that \eqref{eq:opt_step1} is a binary code inference problem, and \eqref{eq:opt_step2_tmp} is a simple binary classification problem. This way, the complicated decision trees learning for supervised hashing \eqref{eq:opt_main} now becomes two relatively simpler tasks---solving \eqref{eq:opt_step1} (Step 1)
and \eqref{eq:opt_step2_tmp}  (Step 2).

{\bf Step 1: Binary code inference.}
For \eqref{eq:opt_step1}, we sequentially optimize for one bit at a time, conditioning on previous bits.
When solving for the $k$-th bit, the cost in \eqref{eq:opt_step1} is written as:
\begin{align}
& \ssum_{i=1}^n\ssum_{j=1}^n
     |y_{ij}| (k y_{ij} - \ssum_{p=1}^k z_{p, i}z_{p, j} )^2 \notag \\
     = & \ssum_{i=1}^n\ssum_{j=1}^n
     |y_{ij}| (k y_{ij} - \ssum_{p=1}^{k-1} z_{p, i}z_{p, j} - z_{k, i}z_{k, j} )^2 \notag \\
     = & \ssum_{i=1}^n\ssum_{j=1}^n 
     -2 |y_{ij}| ( k y_{ij} - \ssum_{p=1}^{k-1} z_{p, i}z_{p, j} ) z_{k, i}z_{k, j} \notag \\
      & + const. 
\end{align}
Hence the optimization for the $k$-th bit can be equivalently formulated as a binary quadratic problem:
\begin{subequations}
\begin{align}
 	\label{eq:opt_step1_bqp}
	 \min_{\z_k \in \{-1, 1\}^{n}} & \ssum_{i=1}^n\sum_{j=1}^n
      a_{ij} z_{k, i}z_{k, j}, \\
      \text{where,} \; & a_{ij}  = -|y_{ij}| ( k y_{ij} - 
      \ssum_{p=1}^{k-1} z^\ast_{p, i} z^\ast_{p, j} ). \label{eq:opt_step1_bqp2}
\end{align}
\end{subequations}
Here $z^\ast$ denotes a binary code in previous bits.
We use a stage-wise scheme for solving each bit.
Specifically, when solving for the $k$-th bit, the bit length is set to $k$ instead of $m$, 
which is shown in \eqref{eq:opt_step1_bqp2}.
In this way, the optimization of current bit depends on the loss 
caused by previous bits, 
which usually leads to better inference results.

\begin{algorithm} [t!]
	\SetAlFnt{\footnotesize}
	\footnotesize{

	\caption{\footnotesize
				Step 1: Block GraphCut for binary code inference
			}
		\label{alg:step1}
		\KwIn{Affinity matrix: $\Y$; bit length: $k$; max
			  inference iteration; blocks:$\{ \block_1, \block_2, ...\}$;
			   binary codes: $\{\z_1, ..., \z_{k-1} \}$.
		}
		\KwOut{Binary codes of one bit: $\z_k$}
		\Repeat{max iteration is reached}
		{
			Randomly permute all blocks\;
			\For{each $\block_i$}
			{
				Solve the inference in \eqref{eq:opt_step1_block} on $\block_i$ using GraphCut\;
			}
		}
	}
\end{algorithm}

Alternatively, one can apply spectral relaxation method to solve \eqref{eq:opt_step1_bqp}, as in TSH.
However solving eigenvalue problems does not scale up to large training sets, and the spectral relaxation is rather loose (hence leading to inferior results).
Here we propose sub-modular formulations for the binary code inference problem and an efficient GraphCut based block search method for solving large-scale inference.
We first group data points into a number of blocks, then optimize the corresponding variables of one block at a time while
conditioning on
the rest of the variables.
Let $\block$ denote a block of data points.
The cost in \eqref{eq:opt_step1_bqp} can be rewritten as:
\begin{subequations}
\begin{align}
	& \ssum_{i=1}^n\ssum_{j=1}^n
      a_{ij} z_{k, i}z_{k, j} \notag \\
     = & \ssum_{i \in \block} \ssum_{j \in \block} a_{ij} z_{k, i}z_{k, j} 
      	 +  \ssum_{i \in \block} \ssum_{j \notin \block} a_{ij} z_{k, i} z_{k, j} \notag \\
      	 & +  \ssum_{i \notin \block} \ssum_{j \in \block} a_{ij}  z_{k, i}  z_{k, j} 
      	 +  \ssum_{i \notin \block} \ssum_{j \notin \block} a_{ij}   z_{k, i}   z_{k, j}. \notag
\end{align}
\end{subequations}
When optimizing for one block, those variables which are not involved in the target block are 
set to constants.
Hence, the optimization for one block can be written as:
 \begin{align}
 	\label{eq:opt_step1_block_tmp}
	\min_{\z_{k, \block} \in \{-1, 1\}^{|\block|}}
	& \ssum_{i \in \block}  \ssum_{j \in \block} a_{ij} z_{k, i} z_{k, j}  \notag \\
      	 & +  2 \ssum_{i \in \block} \ssum_{j \notin \block} a_{ij} z_{k, i} \hat z_{k, j}.
\end{align}
Here $\hat z_k$ denotes a binary code of the $k$-th bit which is not involved in the target block.
With the definition of $a_{ij}$ in \eqref{eq:opt_step1_bqp2}, 
the optimization for one block can be written as:
\begin{subequations}
\label{eq:opt_step1_block_all}
\begin{align}
 	\label{eq:opt_step1_block}
	\min_{\z_{k, \block} \in \{-1, 1\}^{|\block|}}  
	\sum_{i \in \block} u_i z_{k,i} +  \sum_{i \in \block}\sum_{j \in \block}
      v_{ij} z_{k,i} z_{k,j},
\end{align}
\begin{align}
       \text{where,} & \;   v_{ij}  = -|y_{ij}| ( k y_{ij} 
       - \ssum_{p=1}^{k-1} z^\ast_{p, i} z^\ast_{p, j} ), \label{eq:opt_step1_block2} \\
        u_i=  & - 2 \ssum_{j \notin \block} \hat  z_{k,j} |y_{ij}| (k y_{ij}  
        - \ssum_{p=1}^{k-1} z^\ast_{p, i} z^\ast_{p, j} ).
         \label{eq:opt_step1_block3}
\end{align}
\end{subequations}
Here $u_i, v_{ij}$ are constants.
The key to construct a block is to ensure \eqref{eq:opt_step1_block} of
such a block is sub-modular, so we can apply efficient GraphCut.
We refer to this as Block GraphCut (Block-GC), shown in Algorithm \ref{alg:step1}.
Specifically in our hashing problem, by leveraging similarity information, we can easily
construct blocks which meet the sub-modular requirement, as shown in the following
proposition:
\begin{proposition}
\label{pro:p1}
	$\forall i,j \in \block$, if $y_{ij} \geq 0 $,
	the optimization in \eqref{eq:opt_step1_block} is a sub-modular problem.
In other words, for any data point in the block, if it is {\textit not} dissimilar with any other data points in the block,
   then \eqref{eq:opt_step1_block} is sub-modular.
\end{proposition}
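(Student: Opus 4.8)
The plan is to reduce the claim to the standard submodularity (graph-representability) criterion for pairwise pseudo-boolean energies and then verify that criterion directly from the definition of the coefficients in \eqref{eq:opt_step1_block2}.

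First I would recall that the objective in \eqref{eq:opt_step1_block}, namely $\sum_{i\in\block} u_i z_{k,i} + \sum_{i,j\in\block} v_{ij} z_{k,i} z_{k,j}$ over $z_{k,i}\in\{-1,1\}$, splits into unary terms, which never affect submodularity, diagonal contributions $v_{ii} z_{k,i}^2 = v_{ii}$, which are constants and can be dropped, and genuine pairwise terms $v_{ij} z_{k,i} z_{k,j}$ for $i\neq j$. Each such pairwise term equals $v_{ij}$ when $z_{k,i}=z_{k,j}$ and $-v_{ij}$ when $z_{k,i}\neq z_{k,j}$. Translating the usual regularity condition $\theta_{ij}(0,0)+\theta_{ij}(1,1)\le \theta_{ij}(0,1)+\theta_{ij}(1,0)$ to the $\{-1,1\}$ encoding (mapping $0\mapsto-1$, $1\mapsto1$) yields $2v_{ij}\le -2v_{ij}$, so the energy is graph-representable, and GraphCut applicable, precisely when $v_{ij}\le 0$ for every $i,j\in\block$. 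Thus the proposition reduces to the single inequality $v_{ij}\le 0$.

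Next I would substitute \eqref{eq:opt_step1_block2}, $v_{ij} = -|y_{ij}|\,(k y_{ij} - \sum_{p=1}^{k-1} z^\ast_{p,i} z^\ast_{p,j})$, and split on $y_{ij}\in\{-1,0,1\}$. The hypothesis $y_{ij}\ge 0$ leaves only two cases. If $y_{ij}=0$, then $|y_{ij}|=0$ and $v_{ij}=0\le 0$. If $y_{ij}=1$, then $|y_{ij}|=1$, so it suffices to show $k - \sum_{p=1}^{k-1} z^\ast_{p,i} z^\ast_{p,j}\ge 0$; since each product $z^\ast_{p,i} z^\ast_{p,j}\in\{-1,1\}$, the sum over $k-1$ terms is at most $k-1$, whence the bracket is at least $k-(k-1)=1>0$ and $v_{ij}<0$. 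In both cases $v_{ij}\le 0$, so by the criterion above \eqref{eq:opt_step1_block} is submodular; the restatement in terms of ``not dissimilar'' is just the observation that under $y_{ij}\in\{-1,0,1\}$ the condition $y_{ij}\ge 0$ is the same as $y_{ij}\neq -1$.

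I do not expect a real obstacle here; the one place to be careful is the bookkeeping in translating the submodularity condition between the textbook $\{0,1\}$ convention and the $\{-1,1\}$ convention used in the paper, so that the reduction lands cleanly on $v_{ij}\le 0$. A secondary point worth stating explicitly is why the stage-wise choice of bit length $k$ (rather than $m$) in \eqref{eq:opt_step1_block2} does no harm: the bound $\sum_{p=1}^{k-1} z^\ast_{p,i} z^\ast_{p,j}\le k-1$ is exactly what makes the bracket strictly positive, and any bit length $\ge k$ would work as well, so the submodularity guarantee is robust to that design choice.
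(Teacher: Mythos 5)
Your proof is correct and follows essentially the same route as the paper's: reduce submodularity of the pairwise terms to the single condition $v_{ij}\le 0$ and then verify that sign from the definition in \eqref{eq:opt_step1_block2}. Your explicit case split on $y_{ij}\in\{0,1\}$ is in fact slightly more careful than the paper's one-line inequality, which as literally stated can fail when $y_{ij}=0$ (the factor $|y_{ij}|=0$ is what saves that case), but the substance of the argument is identical.
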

\begin{proof}
If  $y_{ij} \geq 0$, $ k y_{ij} \geq \ssum_{p=1}^{k-1} z^\ast_{p, i} z^\ast_{p, j}  $ holds. Thus $v_{ij}  = -|y_{ij}| ( k y_{ij} - \ssum_{p=1}^{k-1} z^\ast_{p, i} z^\ast_{p, j} ) \leq 0. $
Let $\theta_{ij}(z_{k,i},z_{k,j})=v_{ij} z_{k,i} z_{k,j}$, we have $\theta_{ij}(-1,1)= \theta_{ij}(1,-1)=-v_{ij} \geq 0; \theta_{ij}(1,1)= \theta_{ij}(-1,-1)= v_{ij} \leq 0$. Hence $\forall i,j \in \block, \theta_{ij}(1,1) + \theta_{ij}(-1,-1) \leq 0 \leq \theta_{ij}(1,-1) + \theta_{ij}(-1,1)$, which prove the sub-modularity of \eqref{eq:opt_step1_block} \cite{rother2007optimizing}.
\end{proof}
Blocks can be constructed in many ways as long as they satisfy the condition in Proposition \ref{pro:p1}.
A simple greedy method is shown in Algorithm \ref{alg:block}.
Note that the blocks can overlap and the union of them needs to cover all $n$ variables.
If one variable is one block, Block-GC becomes ICM \cite{besag1986statistical, UGM} which optimizes for one variable at a time.

\begin{algorithm}[t!]

	\SetAlFnt{\footnotesize}
	\footnotesize{

	\caption{\footnotesize FastHash}

	\label{alg:main}

	\KwIn{Training data points: $\{\x_1,...\x_n\}$;
	Affinity matrix: $\Y$; bit length: $m$; blocks:$\{ \block_1, \block_2, ...\}$. }
	\KwOut{Hash functions: $\bh=[h_1, ..., h_m]$}
	\For{ $k=1,...,m$}
	{
		Step-1: call Algorithm \ref{alg:step1} to obtain binary codes of $k$-th bit\;
		Step-2: train trees in \eqref{eq:opt_step2} to obtain hash function $h_k$\;
		update the binary codes of $k$-th bit by the output of $h_k$\;
	}

	}

\end{algorithm}

{\bf Step 2: Learning boosted trees as hash functions.}
For binary classification in \eqref{eq:opt_step2_tmp}, usually the zero-one loss is replaced by some convex surrogate loss.
Here we use the exponential loss which is common for boosting methods.
The classification problem for learning the $k$-th hash function is written as:
\begin{align}
 	\label{eq:opt_step2}
	\min_{\w \geq 0} \ssum_{i=1}^n \exp \biggr[-z_{k,i} \ssum_{q=1}^Q  w_q \tree_q(\x_i) \biggr].
\end{align}
We apply Adaboost to solve above problem.
In each boosting iteration, a decision tree as well as its weighting coefficient are learned.
Every node of a binary decision tree is a decision stump.
Training a stump is to find a feature dimension and threshold that minimizes the weighted classification error.
From this point, we are doing feature selection and hash function learning at the same time.
We can easily make use of efficient decision tree learning techniques available in the literature,
which are able to significantly speed up the training.
Here we summarize some techniques that are included in our implementation:
(i) We have used the highly efficient stump implementation proposed in the recent work of \cite{appelquickly}, which is around 10 times faster than conventional stump implementation.
(ii) Feature quantization can significantly speed up tree training without performance loss in practice, and also largely reduce the memory consuming. As in \cite{appelquickly}, we linearly quantize feature values into 256 bins.
(iii) We apply the weight-trimming technique described in \cite{friedman2000additive, appelquickly}.
In each boosting iteration, the smallest $10\%$ weightings are trimmed (set to 0).
(iv) We apply the LazyBoost technique:
only a random subset of feature dimensions are evaluated for tree node splitting.

Finally, we summarize our hashing method (\fasth) in Algorithm \ref{alg:main}.
In contrast with TSH, we alternate Step-1 and Step-2 iteratively.
For each bit, the binary code is updated by applying the learned hash function.
Hence, the learned hash function is able to make a feedback for binary code inference of next bit,
which may lead to better performance.

\begin{table}[t]
\caption{Comparison of KSH and our \fasth. KSH results
with different number of support vectors.
Both of our \fasth and FastHash-Full outperform KSH by a large margin in terms of training time, binary encoding time (Test time) and retrieval precision.}
\centering
\resizebox{.9\linewidth}{!}
  {
  \begin{tabular}{ l | l c | l c c}
  \hline\hline
Method  &\#Train  &\#Support Vector &Train time &Test time  &Precision\\
\hline
\multicolumn{6}{  c }{CIFAR10 (features:11200)} \\ \hdashline
KSH &5000 &300  &1082 &22 &0.480\\
KSH &5000 &1000 &3481 &57 &0.553\\
KSH &5000 &3000 &52747  &145  &0.590\\
\best FastH &5000 &N/A  &331  &21 &\best 0.634\\
\best FastH-Full  &50000  &N/A  &1794 &21 &\best 0.763\\
\hline
\multicolumn{6}{  c }{IAPRTC12 (features:11200)} \\ \hdashline
KSH &5000 &300  &1129 &7  &0.199\\
KSH &5000 &1000 &3447 &21 &0.235\\
KSH &5000 &3000 &51927  &51 &0.273\\
\best FastH &5000 &N/A  &331  &9  &\best 0.285\\
\best FastH-Full  &17665  &N/A  &620  &9  &\best 0.371\\
\hline
\multicolumn{6}{  c }{ESPGAME (features:11200)} \\ \hdashline
KSH &5000 &300  &1120 &8  &0.124\\
KSH &5000 &1000 &3358 &22 &0.139\\
KSH &5000 &3000 &52115  &46 &0.163\\
\best FastH &5000 &N/A  & 309 &9  &\best 0.188\\
\best FastH-Full  &18689  &N/A  &663  &9  &\best 0.261\\
\hline
\multicolumn{6}{  c }{MIRFLICKR (features:11200)} \\ \hdashline
KSH &5000 &300  &1036 &5  &0.387\\
KSH &5000 &1000 &3337 &13 &0.407\\
KSH &5000 &3000 &52031  &42 &0.434\\
\best FastH &5000 &N/A  &278  &7  &\best0.555\\
\best FastH-Full  &12500  &N/A  &509  &7  &\best 0.595\\

  \hline \hline
  \end{tabular}
  }
\label{tab:ksh}
\end{table}

\section{Experiments}

We here describe the results of
comprehensive experiments
carried out
on several large image datasets
in order to evaluate the proposed method
in terms of training time, binary encoding time and retrieval performance. 
For decision tree learning in our \fasth, if not specified, the tree depth is set to 4,
and the number of boosting iterations is set to 200.
We compare to a number of recent supervised and unsupervised hashing methods.
The retrieval performance is measured in 3 ways: the precision of top-K ($K=100$)
  retrieved examples (denoted as Precision), mean average precision (MAP) and the area under the Precision-Recall curve (Prec-Recall).
Results are reported on 5 image datasets which cover a wide variety of images.
The dataset CIFAR10\footnote{\url{http://www.cs.toronto.edu/~kriz/cifar.html}
}
  contains $60,000$ images. The datasets IAPRTC12 and ESPGAME \cite{guillaumin2009tagprop}
contain
around $20,000$ images,
and
MIRFLICKR \cite{huiskes2008mir} is a collection of $25,000$ images. SUN397
\cite{xiao2010sun} is a large image dataset which contains more than $100,000$ scene images
form 397 categories.

For the
multi-class datasets: CIFAR10 and SUN397, the ground truth pairwise similarity is defined as multi-class label agreement. For datasets: IAPRTC12, ESPGAME and MIRFLICKR, of which the keyword (tags) annotation are provided in \cite{guillaumin2009tagprop}, two images are treated as semantically similar if they are annotated with at lease 2 identical keywords (or tags).
Following a conventional setting in \cite{KSH, kulis2009learning},
a large portion of the dataset is
allocated as an image database for training and retrieval %
and the rest is put aside
for
testing queries. Specifically, for CIFAR10, IAPRTC12, ESPGAME and MIRFLICKER, the provided splits are used; for SUN397, 8000 images are randomly selected as
test %
queries,
while the remaining 100417 images form
the training set.
If not specified, $64$-bit binary codes are generated using comparing methods for evaluation.

We extract codebook-based features following
the conventional pipeline
from \cite{coates2011importance, kiros12}:
we employ K-SVD for codebook (dictionary) learning with a codebook size of 800, soft-thresholding for patch encoding and spatial pooling of 3 levels, which results 11200-dimensional features. We also
tested increasing
 the codebook size to 1600
 which results in
  22400-dimensional features. %

\begin{table}[t]
\caption{Comparison of TSH and our \fasth for binary code inference  in Step 1.
  The proposed Block GraphCut (Block-GC)
    achieves much lower objective value and also takes less inference time than the spectral method,
             and thus performs much better.
}
\centering
\resizebox{.9\linewidth}{!}
  {
  \begin{tabular}{ l | l c | c c}
  \hline\hline
Step-1 methods  &\#train &Block Size  & Time (s)  &Objective\\
\hline
\multicolumn{5}{  c }{SUN397} \\ \hdashline
Spectral (TSH)  &100417 &N/A  &5281 &0.7524\\
Block-GC-1 (FastH)  &100417 &1  &\best 298  &0.6341\\
Block-GC (FastH)  &100417 &253  &2239 &\best 0.5608\\
\hline
\multicolumn{5}{  c }{CIFAR10} \\ \hdashline
Spectral (TSH)  &50000  &N/A  &1363 &0.4912\\
Block-GC-1 (FastH)  &50000  &1  &\best 158  &0.5338\\
Block-GC (FastH)  &50000  &5000 &788  &\best 0.4158\\
\hline
\multicolumn{5}{  c }{IAPRTC12} \\ \hdashline
Spectral (TSH)  &17665  &N/A  &426  &0.7237\\
Block-GC-1 (FastH)  &17665  &1  &\best 43 &0.7316\\
Block-GC (FastH)  &17665  &316  &70 &\best 0.7095\\
\hline
\multicolumn{5}{  c }{ESPGAME} \\ \hdashline
Spectral (TSH)  &18689  &N/A  &480  &0.7373\\
Block-GC-1 (FastH)  &18689  &1  &\best 45 &0.7527\\
Block-GC (FastH)  &18689  &336  &72 &\best 0.7231\\
\hline
\multicolumn{5}{  c }{MIRFLICKR} \\ \hdashline
Spectral (TSH)  &12500  &N/A  &125  &0.5718\\
Block-GC-1 (FastH)  &12500  &1  &\best 28 &0.5851\\
Block-GC (FastH)  &12500  &295  &40 &\best 0.5449\\
  \hline \hline
  \end{tabular}
  }
\label{tab:tsh_step1}
\end{table}

\begin{figure*}
    \centering

   \includegraphics[width=.28\linewidth]{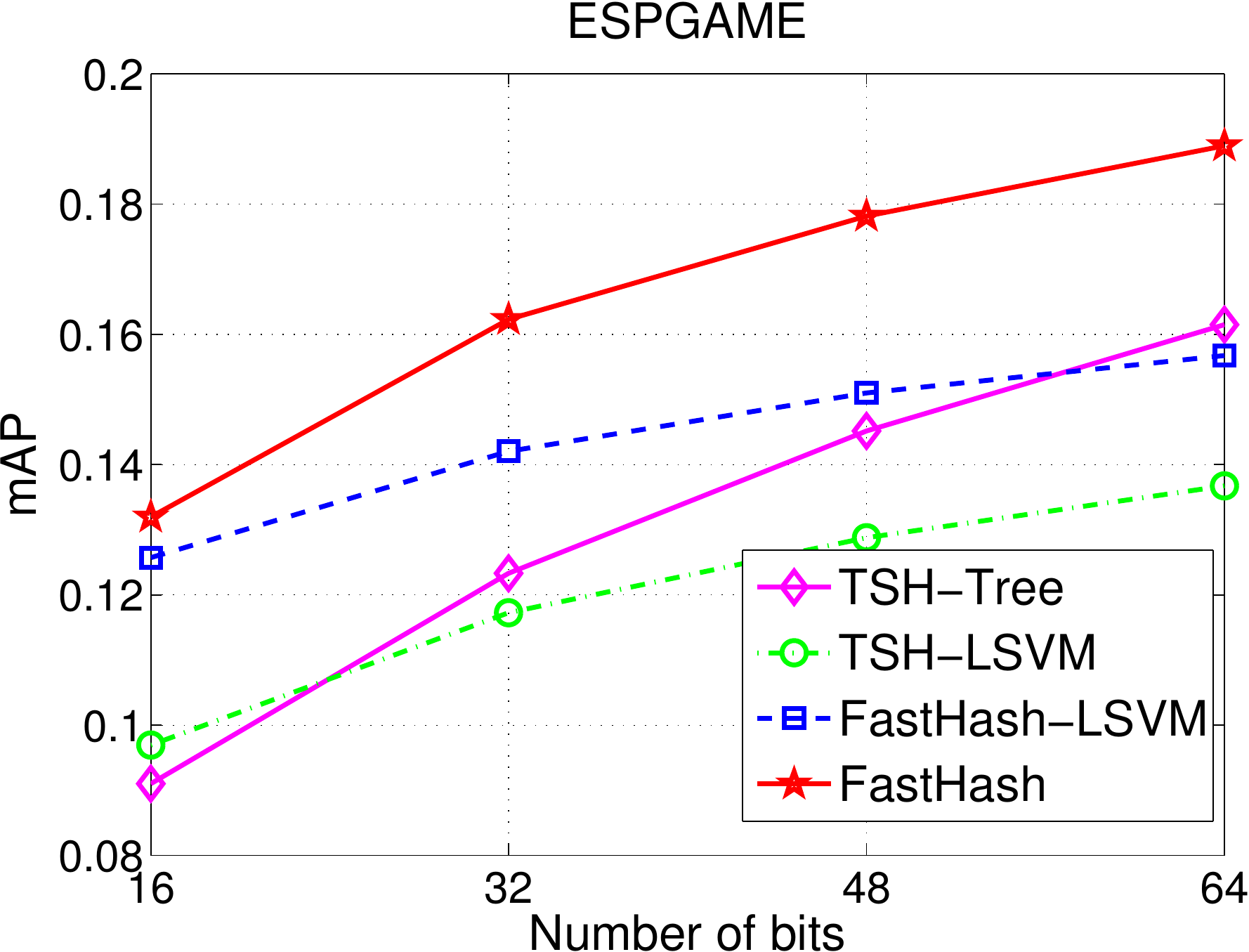}
   \includegraphics[width=.28\linewidth]{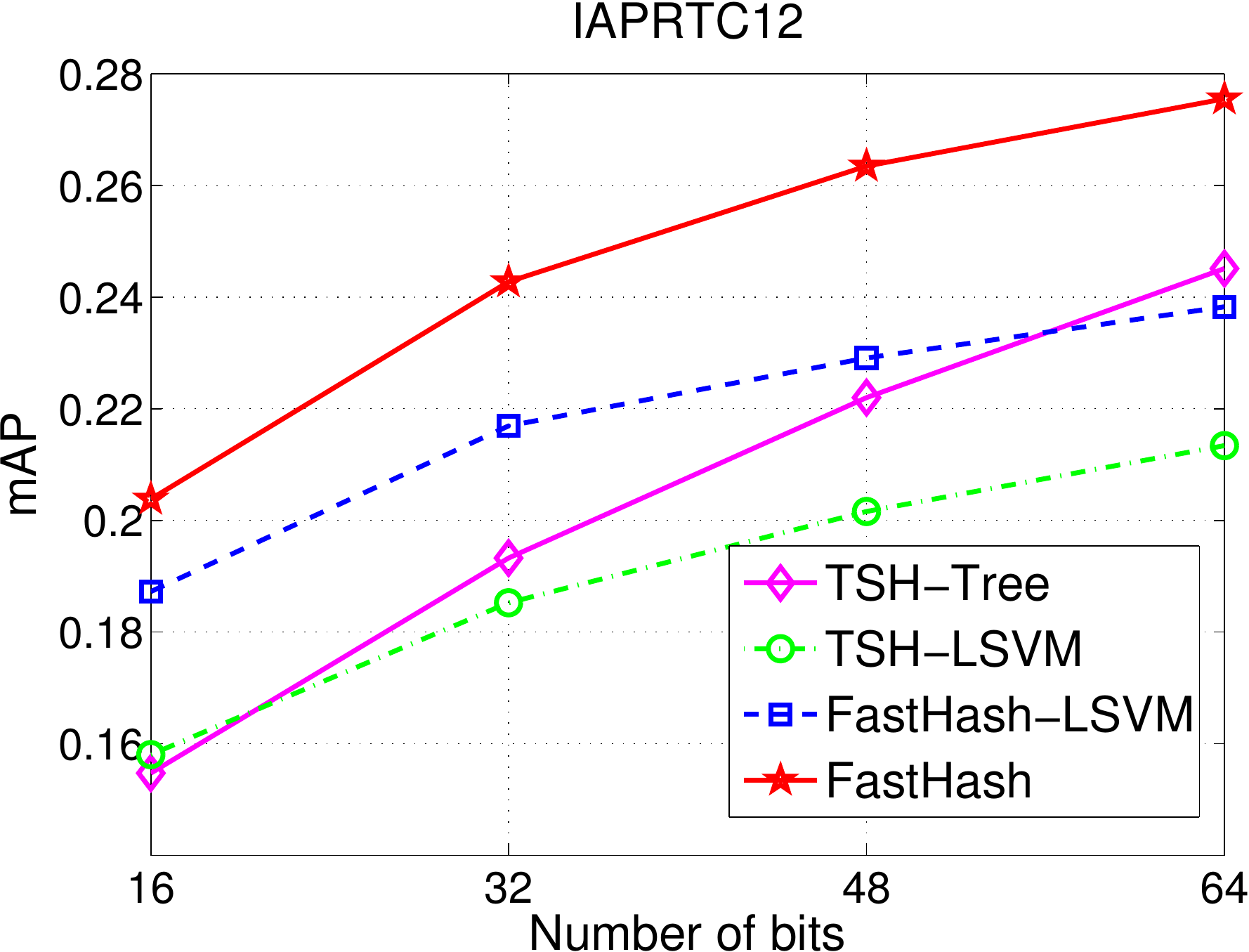}
   \includegraphics[width=.28\linewidth]{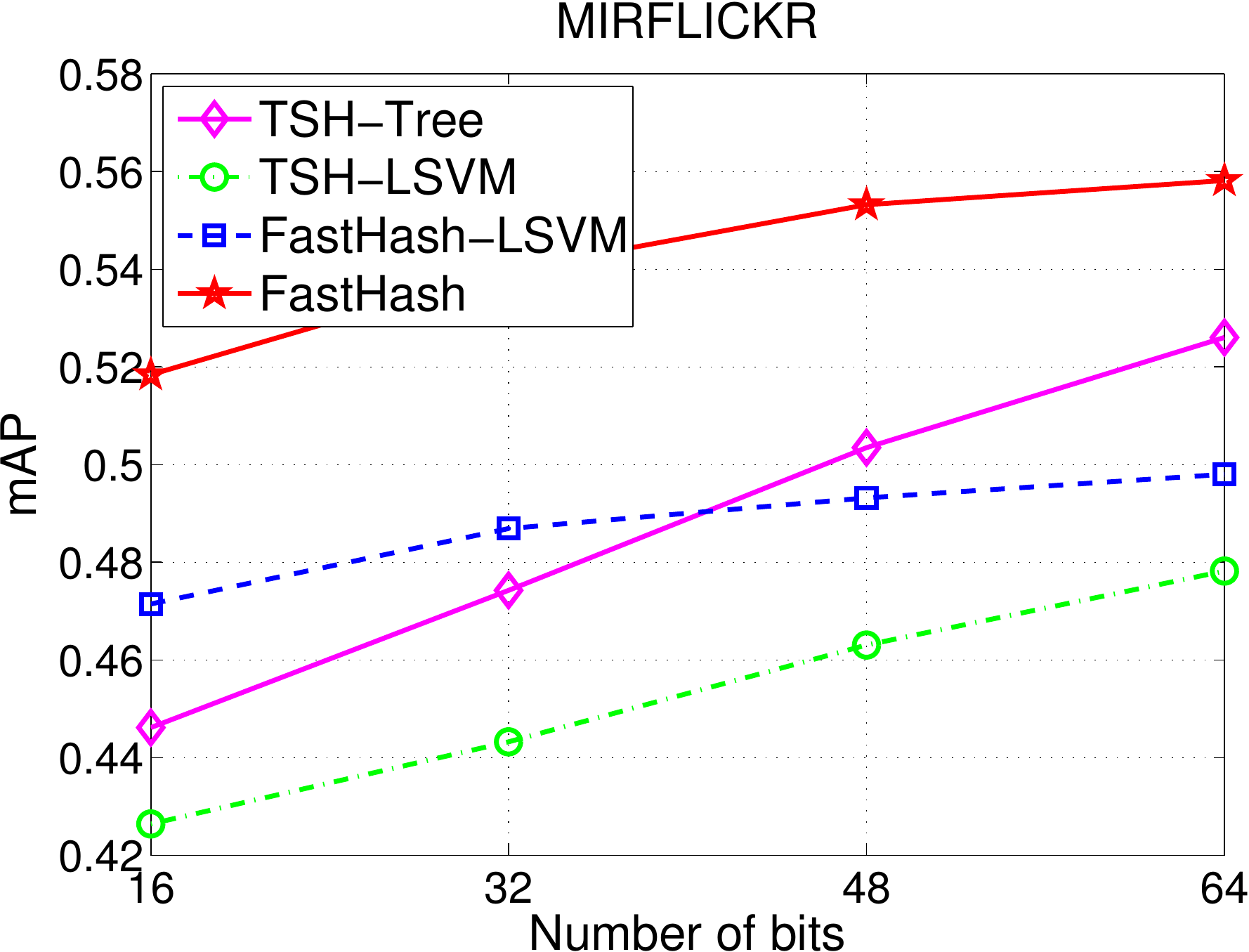}

   \includegraphics[width=.28\linewidth]{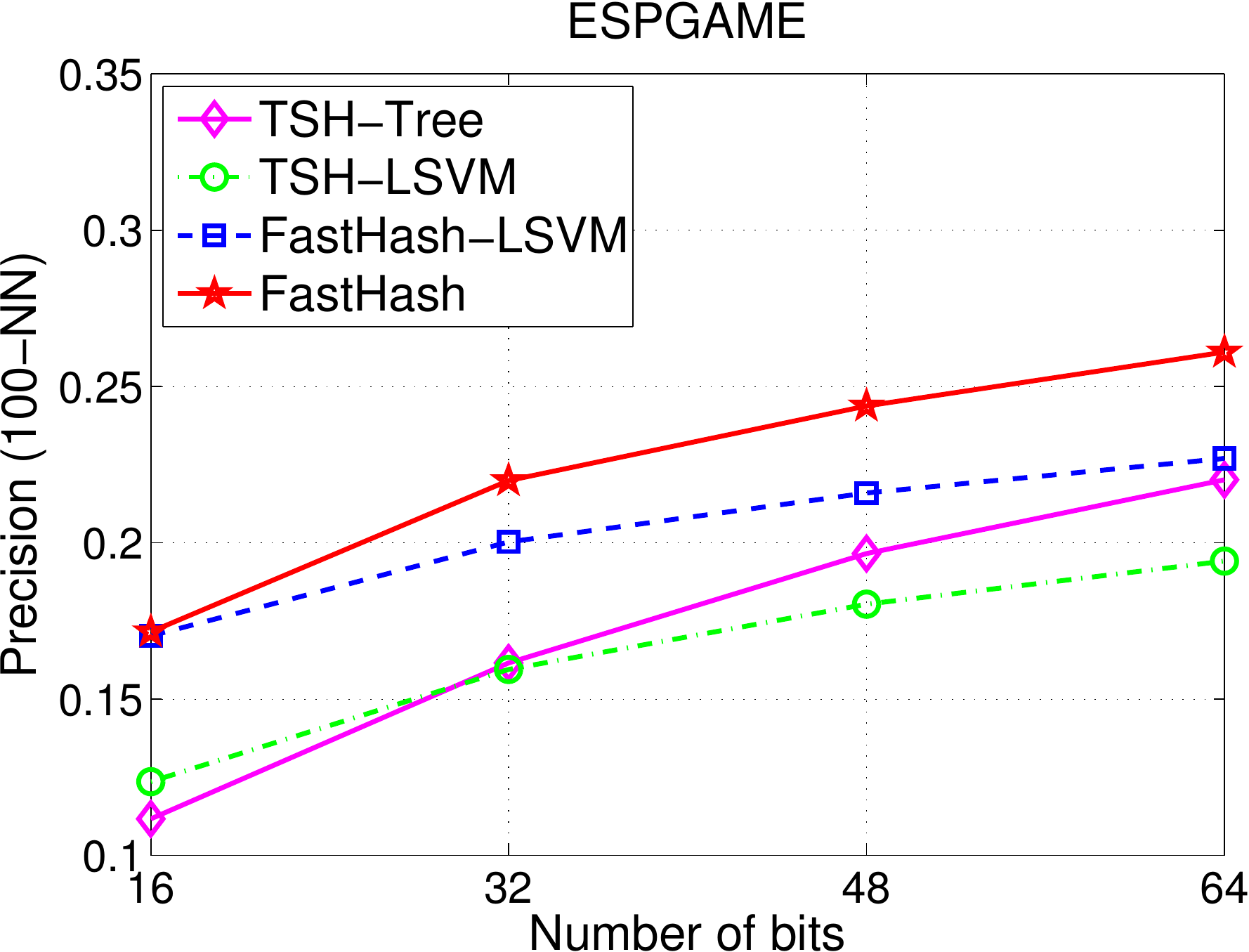}
   \includegraphics[width=.28\linewidth]{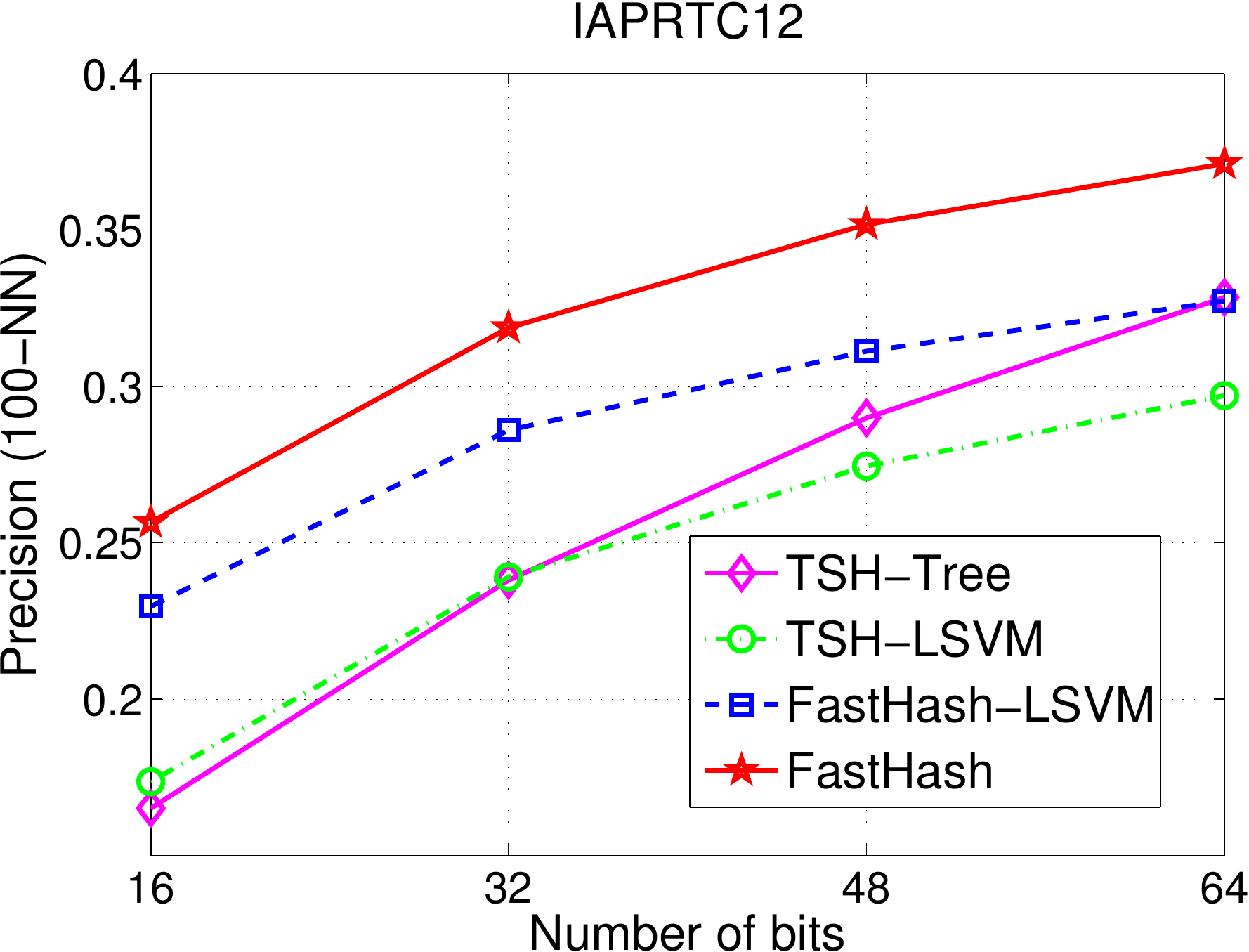}
   \includegraphics[width=.28\linewidth]{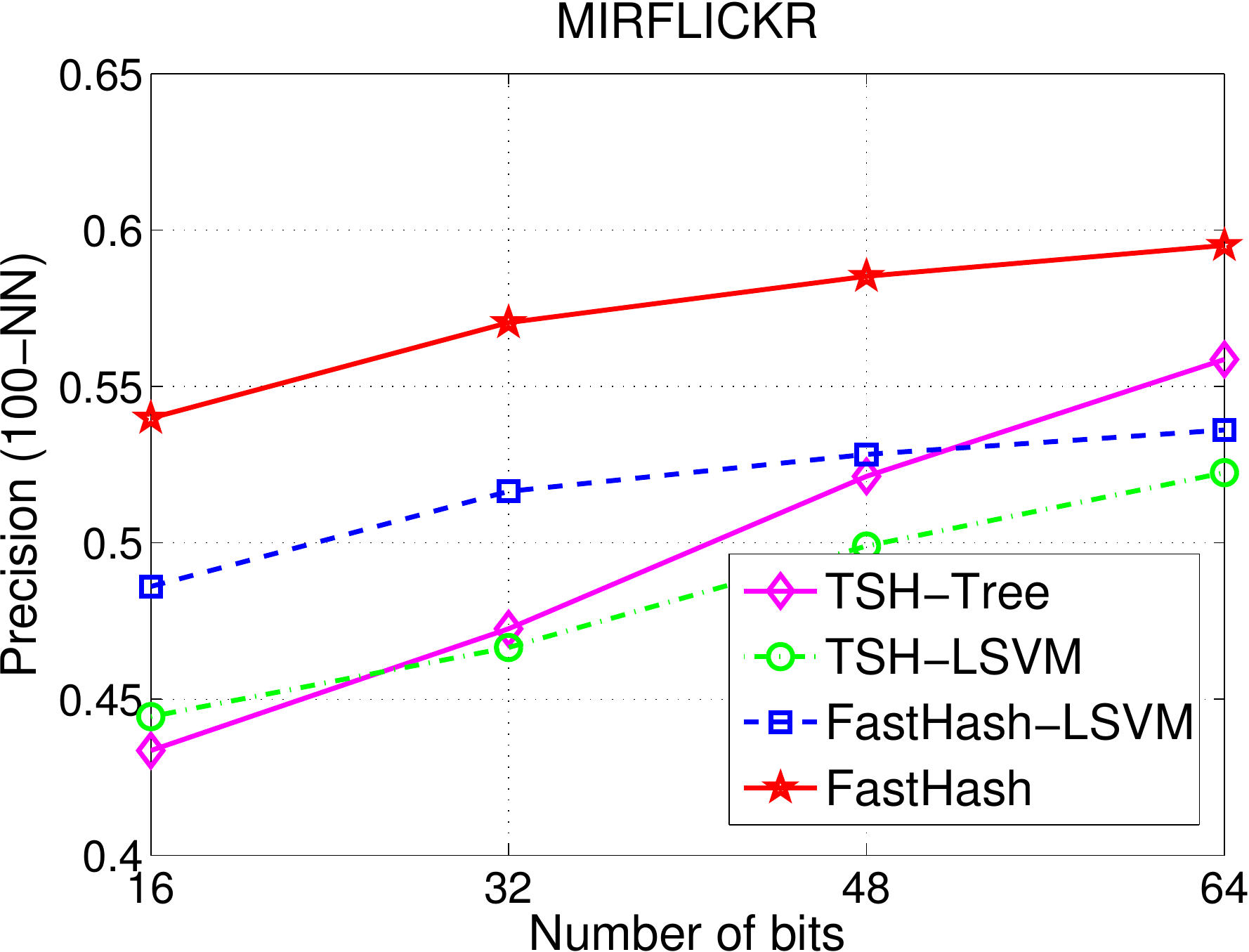}

    \caption{Comparison of
    various
    combinations of hash functions and binary inference methods.
    Note that the proposed \fasth
    uses decision tree as hash functions. The proposed decision tree hash function performs much better than the linear SVM hash function.
Moreover, our \fasth performs much better than TSH when using the same hash function in Step 2.}
    \label{fig:tsh}
\end{figure*}

\begin{figure*}
    \centering

   \includegraphics[width=.28\linewidth]{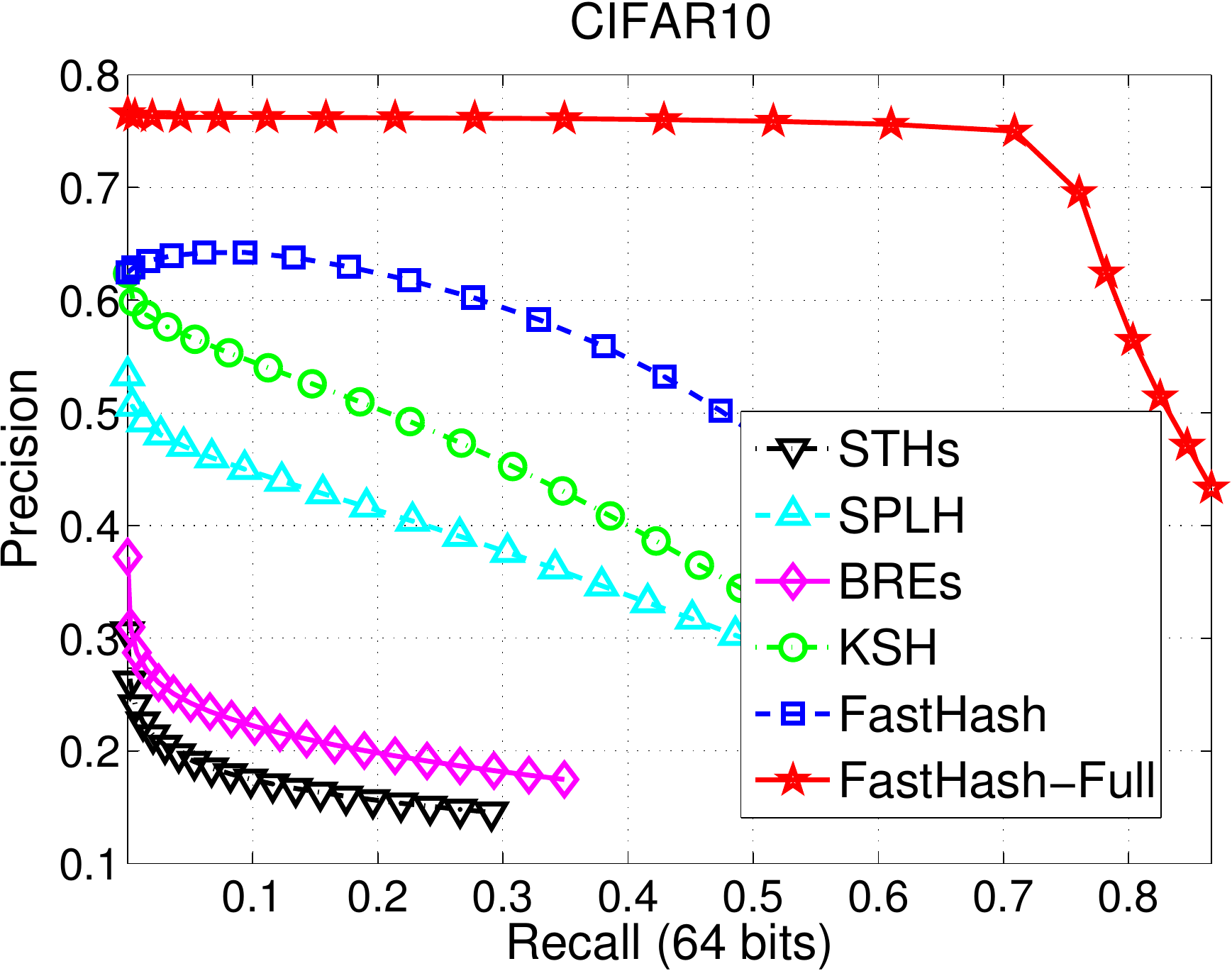}
   \includegraphics[width=.28\linewidth]{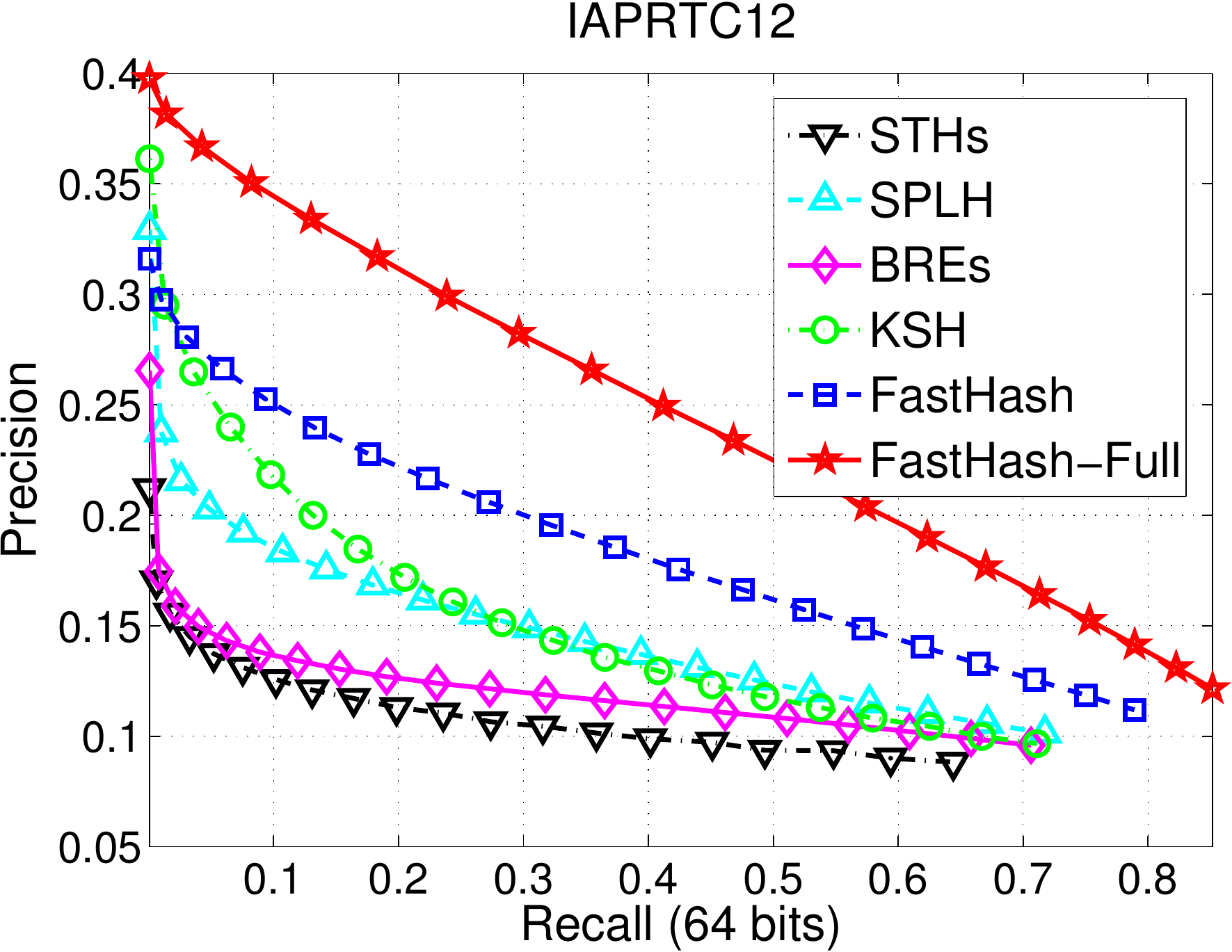}
   \includegraphics[width=.28\linewidth]{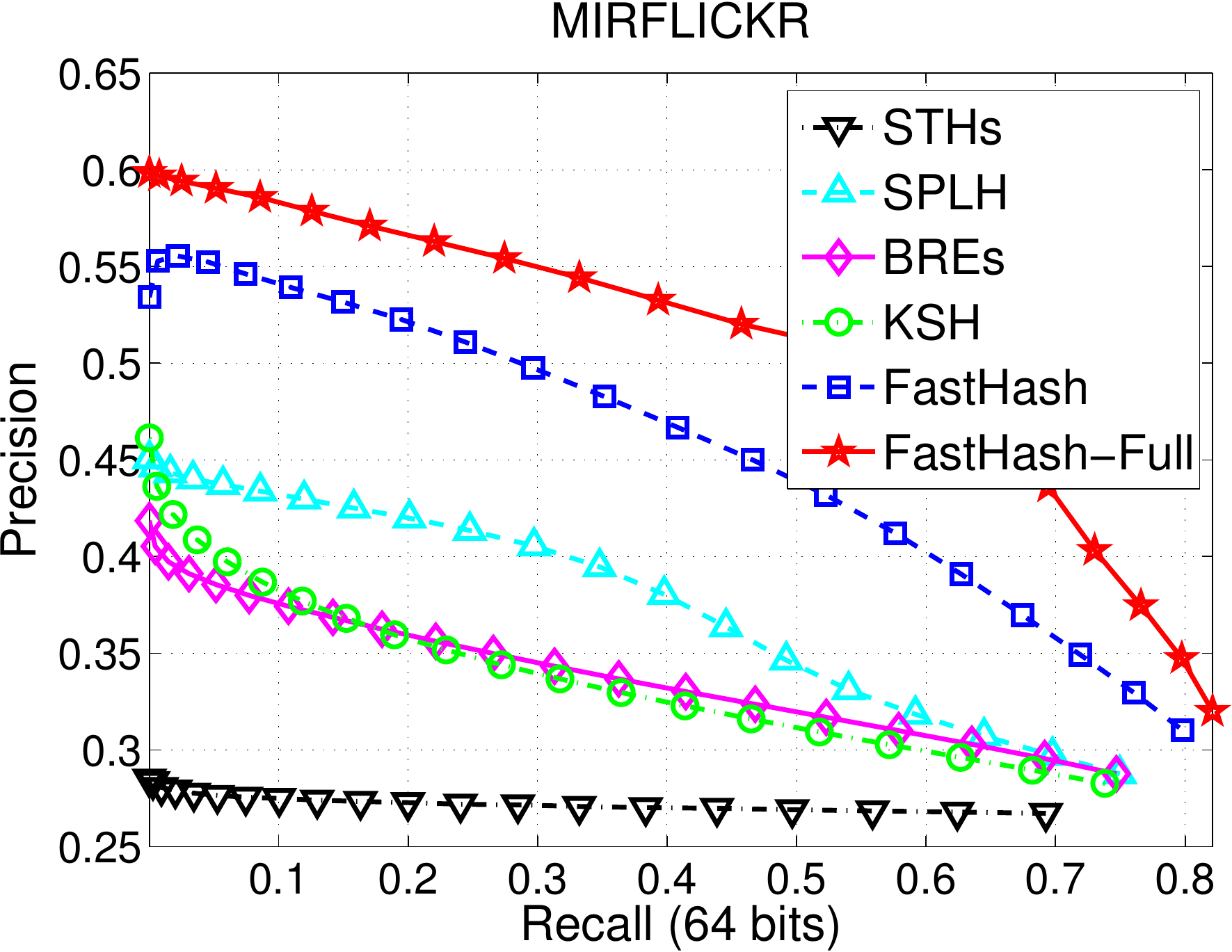}

   \includegraphics[width=.28\linewidth]{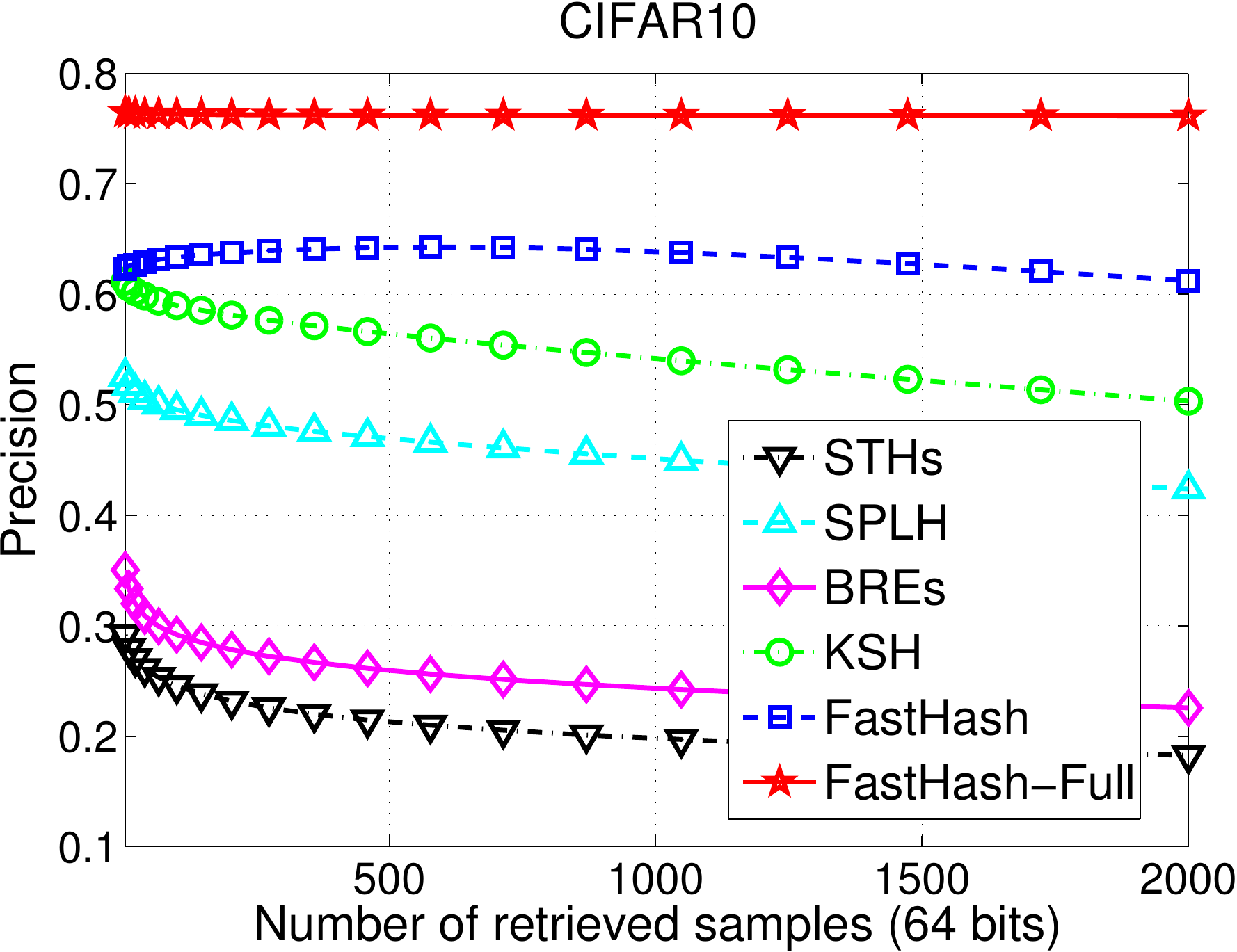}
   \includegraphics[width=.28\linewidth]{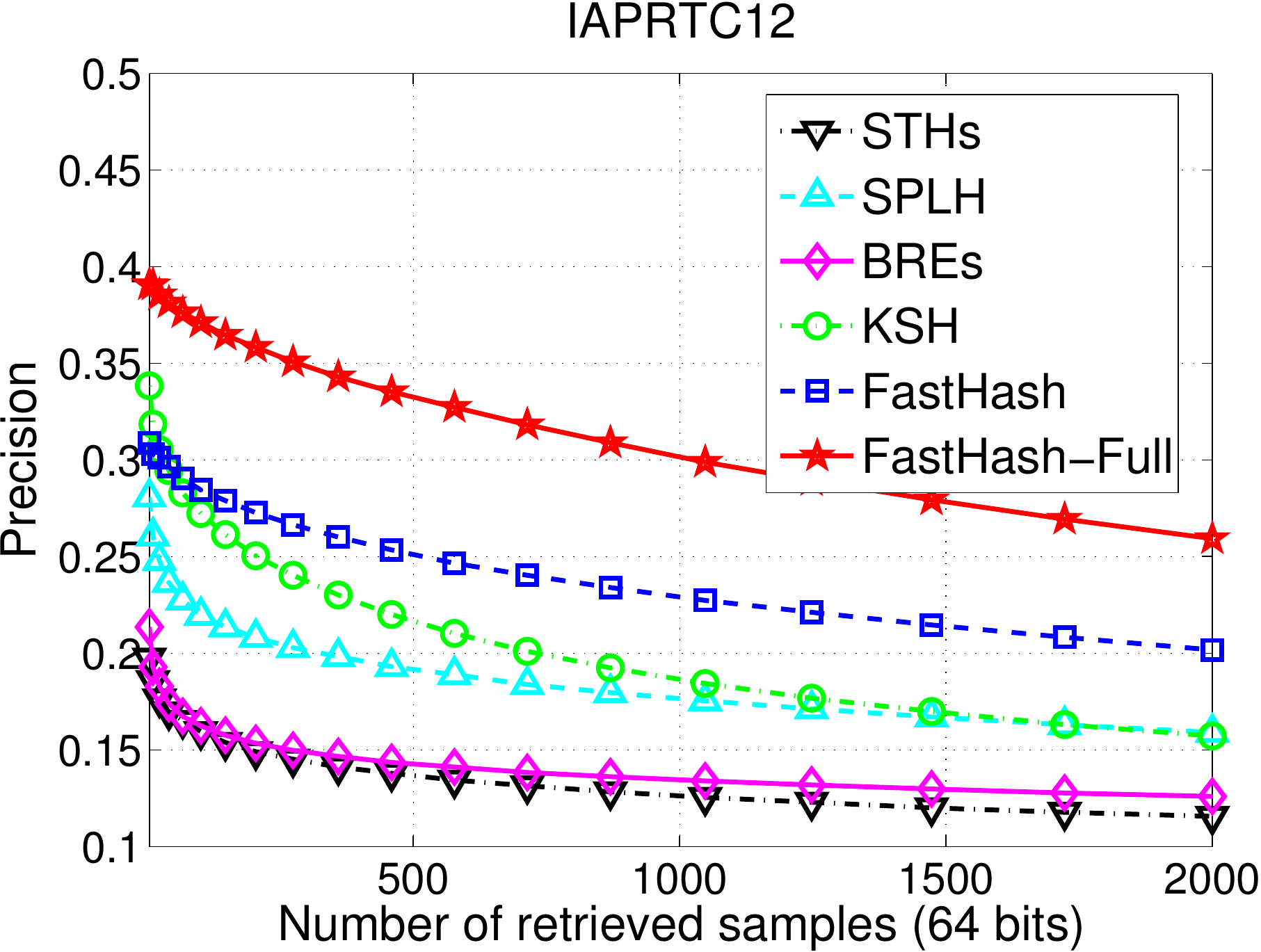}
   \includegraphics[width=.28\linewidth]{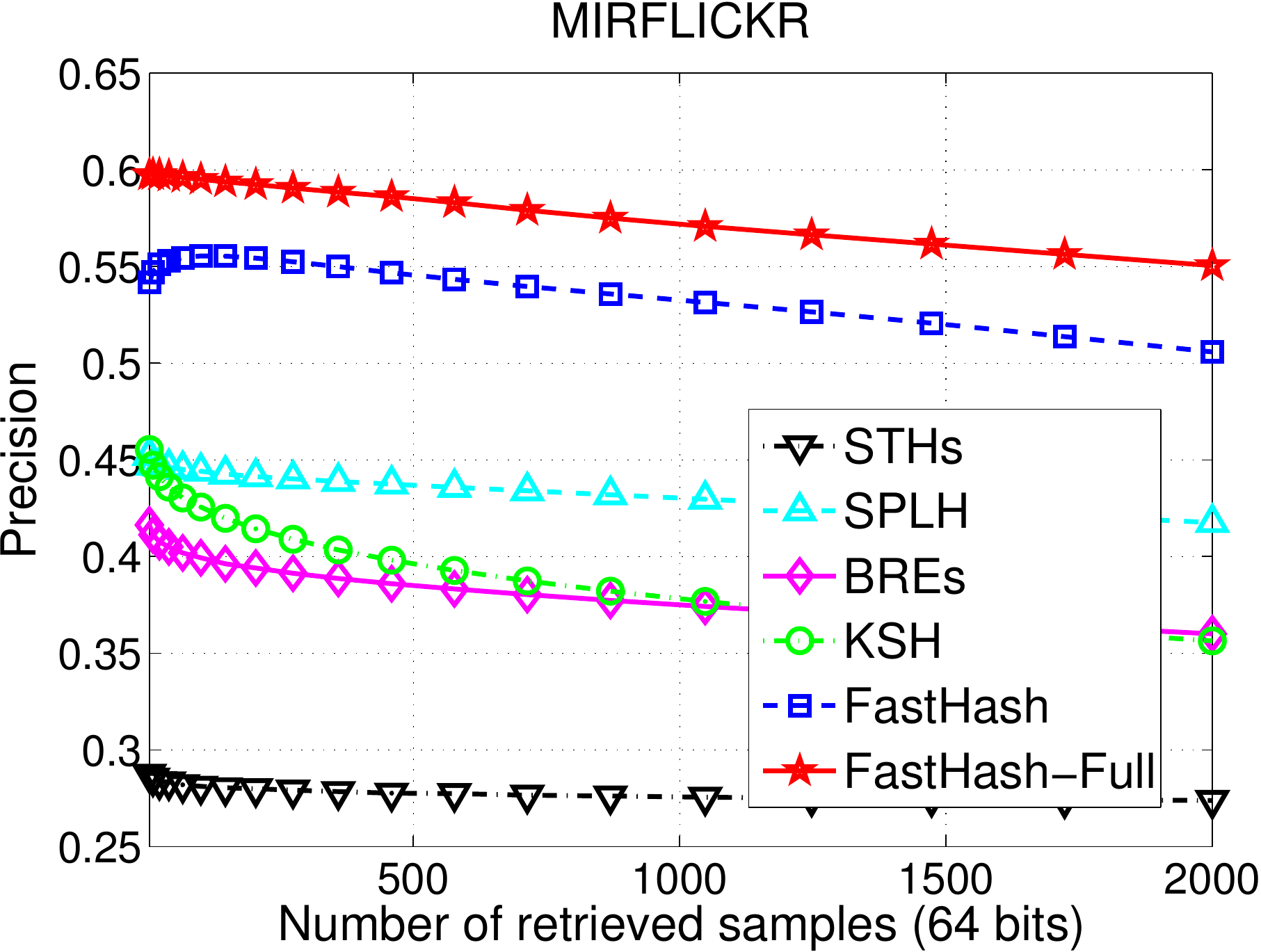}

    \caption{Results on high-dimensional codebook features. The precision and recall curves are given in the first row. The precision curves of the top 2000 retrieved examples are given on the second row.
    Both our \fasth and FastHash-Full outperform their comparators by a large margin.}
    \label{fig:features}
\end{figure*}

\subsection{Comparison with KSH}
KSH \cite{KSH} has been shown to outperform many state-of-the-art comparators.
The fact that our method employs the same loss function as KSH thus motivates further comparison against this key method.
KSH employs a simple kernel technique by predefining a set of support vectors then learning linear weightings for each hash function.
In the works of \cite{KSH, TSH}, KSH is evaluated only on  low dimensional GIST features (512 dimensions) using a small number of support vectors (300).
Here, in contrast, we evaluate KSH on  high-dimensional codebook features,
and vary the number of support vectors from 300 to 3000.
KSH is trained on a sampled set of 5000 examples.
The results of these tests are summarized in Table \ref{tab:ksh},
which shows that increasing the number of support vectors consistently improves the retrieval performance of KSH.
However, even on this small training set, including more support vectors will dramatically increase the training time and binary encoding time of KSH.
We have run our \fasth both on the same sampled training set and the whole training set (labeled as FastHash-Full).
Our \fasth and FastHash-Full outperform KSH by a large margin both in terms of training speed and and retrieval precision. The results also show that the decision tree hash functions in \fasth are much more efficient for testing (binary encoding) than the kernel function in KSH.
Our \fasth is orders of magnitude faster than KSH on training,
and thus much better suited to large training sets and high-dimensional data.
For the low-dimensional GIST feature, our \fasth also performs much better than KSH in retrieval, see Table \ref{tab:features} for details.
The retrieval performance is also plotted in Fig. \ref{fig:features}. 
If not specified,
the number of support vectors for KSH is set to 3000.

\begin{table*}[t]
\caption{Results using two types of features: low-dimensional GIST features and the
  high-dimensional codebook features. Our \fasth and FastHash-Full outperform
  the comparators by a large margin on both feature types.
  In terms of training time, our \fasth is also much faster than others on the high-dimensional codebook features.}
\centering
\resizebox{0.85\linewidth}{!}
  {
  \begin{tabular}{ c l || l l c c c | l l c c c}
  \hline \hline
    & & \multicolumn{5}{ c | }{GIST feature (320 / 512 dimensions)}
& \multicolumn{5}{ c }{Codebook feature (11200 dimensions)}
  \\ \hline
Method &\#Train &Train time &Test time  &Precision  &MAP  &Prec-Recall  &Train time (s) &Test time (s)
 &Precision  &MAP  &Prec-Recall
\\
\hline
\multicolumn{12}{ c }{CIFAR10} \\  \hdashline
KSH &5000 &52173  &8  &0.453  &0.350  &0.164  &52747  &145  &0.590  &0.464  &0.261  \\
BREs  &5000 &481  &1  &0.262  &0.198  &0.082  &18343  &8  &0.292  &0.216  &0.089  \\
SPLH  &5000 &102  &1  &0.368  &0.291  &0.138  &9858 &4  &0.496  &0.396  &0.219  \\
STHs  &5000 &380  &1  &0.197  &0.151  &0.051  &6878 &4  &0.246  &0.175  &0.058  \\
\best FastH &5000 &304  &21 &\best 0.517  &\best 0.462  &\best 0.243  &331  &21 &\best 0.634  &\best 0.575  &\best 0.358  \\
\best FastH-Full  &50000  &1681 &21 &\best 0.649  &\best 0.653  &\best 0.450  &1794 &21 &\best 0.763  &\best 0.775  &\best 0.605  \\
\hline
\multicolumn{12}{ c }{IAPRTC12} \\  \hdashline
KSH &5000 &51864  &5  &0.182  &0.126  &0.083  &51927  &51 &0.273  &0.169  &0.123  \\
BREs  &5000 &6052 &1  &0.138  &0.109  &0.074  &6779 &3  &0.163  &0.124  &0.097  \\
SPLH  &5000 &154  &1  &0.160  &0.124  &0.084  &10261  &2  &0.220  &0.157  &0.119  \\
STHs  &5000 &628  &1  &0.099  &0.092  &0.062  &10108  &2  &0.160  &0.114  &0.076  \\
\best FastH &5000 &286  &9  &\best 0.232  &\best 0.168  &\best 0.117  &331  &9  &\best 0.285  &\best 0.202  &\best 0.146  \\
\best FastH-Full  &17665  &590  &9  &\best 0.316  &\best 0.240  &\best 0.178  &620  &9  &\best 0.371  &\best 0.276  &\best 0.210  \\
\hline
\multicolumn{12}{ c }{ESPGAME} \\  \hdashline
KSH &5000 &52061  &5  &0.118  &0.077  &0.054  &52115  &46 &0.163  &0.100  &0.072  \\
BREs  &5000 &714  &1  &0.095  &0.070  &0.050  &16628  &3  &0.111  &0.076  &0.059  \\
SPLH  &5000 &185  &1  &0.160  &0.124  &0.084  &11740  &2  &0.148  &0.104  &0.074  \\
STHs  &5000 &616  &1  &0.099  &0.092  &0.062  &11045  &2  &0.087  &0.064  &0.042  \\
\best FastH &5000 &289  &9  &\best 0.157  &\best 0.106  &\best 0.070  &309  &9  &\best 0.188  &\best 0.125  &\best 0.081  \\
\best FastH-Full  &18689  &448  &9  &\best 0.228  &\best 0.169  &\best 0.109  &663  &9  &\best 0.261  &\best 0.189  &\best 0.126  \\
\hline
\multicolumn{12}{ c }{MIRFLICKR} \\  \hdashline
KSH &5000 &51983  &3  &0.379  &0.321  &0.234  &52031  &42 &0.434  &0.350  &0.254  \\
BREs  &5000 &1161 &1  &0.347  &0.310  &0.224  &13671  &2  &0.399  &0.345  &0.250  \\
SPLH  &5000 &166  &1  &0.379  &0.337  &0.241  &9824 &2  &0.444  &0.391  &0.277  \\
STHs  &5000 &613  &1  &0.268  &0.261  &0.172  &10254  &2  &0.281  &0.272  &0.174  \\
\best FastH &5000 &307  &7  &\best 0.477  &\best 0.429  &\best 0.299  &338  &7  &\best 0.555  &\best 0.487  & \best 0.344 \\
\best FastH-Full  &12500  &451  &7  &\best 0.525  &\best 0.507  &\best 0.345  & 509 &7  &\best 0.595  &\best 0.558  &\best 0.420  \\
  \hline \hline
  \end{tabular}
  }
\label{tab:features}
\end{table*}

\subsection{Comparison with TSH}
The proposed \fasth employs a similar two-step approach to that of TSH \cite{TSH}.
We first compare binary code inference in Step 1: the proposed Block GraphCut (Block-GC) and the spectral method in TSH.
The iteration number of Block-GC is set to 2.
The results of testing are summarized in Table \ref{tab:tsh_step1}.
We construct blocks using Algorithm \ref{alg:block}.
The average block size is reported in the table.
We also evaluate a special case where the block size is set to 1 for Block-CG (labeled as Block-CG-1),
in which case
Block-GC is reduced to the ICM \cite{besag1986statistical, UGM} method.
It shows that when the training set gets larger, the spectral method becomes slow.
The objective value shown in the table is divided by the number of defined pairwise relations.
The proposed Block-GC achieves much lower objective values and takes less inference time, and hence outperforms the spectral method.
The inference time for Block-CG  increases only linearly with the training set size.
We now provide results comparing different combinations of
hash functions (Step 2) and binary code inference methods (Step 1).
We evaluate the linear SVM and the proposed decision tree hash functions with different binary code inference methods (Spectral method in TSH and Block-GC in \fasth).
The 11200-dimensional codebook features are used here.
The retrieval performance is shown in Fig.~\ref{fig:tsh} by varying the number of bits.
As expected, the proposed decision tree hash function performs much better than linear SVM hash function.
It also shows that our \fasth performs much better than TSH when using the same type of hash function for Step 2
(decision tree or linear SVM hash function), which indicates that the proposed Block-GC method for binary code inference and the stage-wise learning strategy is able to generate high quality binary codes.
We also can train RBF-kernel SVM as hashing function in Step 2,
however, as the case here, when applied on large training set and high-dimensional data, the training of RBF SVM almost become intractable. Even using the stochastic kernel SVM (BSGD) \cite{wang2012breaking} with a support vector budget, the training and testing cost are still very expensive. 

\subsection{Comparison on different features}
We compare hashing methods on the the low-dimensional (320 or 512) GIST feature and the
high-dimensional (11200) codebook feature.
We extract GIST features of 320 dimensions for CIFAR10 which contains low resolution
images, and 520 dimensions for other datasets.
Several state-of-the-art supervised methods are included in this comparison:
KSH \cite{KSH}, Supervised Self-Taught Hashing (STHs) \cite{zhang2010self}, and
Semi-supervised Hashing (SPLH)
  \cite{wang2010semi}.
The result is presented in Table \ref{tab:features}.
The codebook features consistently show better result than GIST features.
Comparing methods are trained on a sampled training set (5000 examples).
Results show that comparing methods can be efficiently trained on the GIST features. However, when applied on high dimensional features, even on a small training set (5000), their training time dramatically increase.
Large matrix multiplication and solving eigenvalue problem on a large matrix may account for the expensive computation in these comparing methods.
It would be very difficult to train these methods on the whole training set.
The training time of KSH mainly depends on the number of support vectors (3000 is used here).
We run our \fasth on the same sampled training set (5000 examples) and the whole training set (labeled as FastHash-Full).
Results show that \fasth can be efficiently trained on the whole dataset.
\fasth and FastHash-Full outperform others by a large margin both on GIST and codebook features.
The training of \fasth is also orders of magnitudes faster than others on the high-dimensional codebook features.
The retrieval performance on codebook features is plotted in Fig. \ref{fig:features}.

\begin{table}[t]
\caption{Results of methods with dimension reduction. KSH, SPLH and STHs are trained with PCA feature reduction. Our \fasth outperforms others by a large margin on retrieval performance.}
\centering
\resizebox{.9\linewidth}{!}
  {
  \begin{tabular}{ l l | l l c c}
  \hline\hline
Method  &\# Train &Train time &Test time  &Precision  &MAP\\
\hline
\multicolumn{6}{  c }{CIFAR10} \\ \hdashline
PCA+KSH&50000&$-$ &$-$  &$-$  &$-$\\
PCA+SPLH  &50000  &25984  &18 &0.482  &0.388\\
PCA+STHs  &50000  &7980 &18 &0.287  &0.200\\
CCA+ITQ&50000&1055  &7  &0.676  &0.642\\
\best FastH &50000  &1794&21  &\best 0.763  &\best 0.775  \\
\hline
\multicolumn{6}{  c }{IAPRTC12} \\ \hdashline
PCA+KSH &17665  &55031  &11 &0.082  &0.103\\
PCA+SPLH  &17665  &1855 &7  &0.239  &0.169\\
PCA+STHs  &17665  &2463 &7  &0.174  &0.126\\
CCA+ITQ&17665&804 &3  &0.332  &0.198\\
\best FastH &17665  &620  &9  &\best 0.371  &\best 0.276\\
\hline
\multicolumn{6}{  c }{ESPGAME} \\ \hdashline
PCA+KSH &18689  &55714  &11 &0.141  &0.084\\
PCA+SPLH  &18689  &2409 &7  &0.153  &0.103\\
PCA+STHs  &18689  &2777 &7  &0.098  &0.069\\
CCA+ITQ&18689&814 &3  &0.216  &0.131\\
\best FastH &18689  &663  &9  &\best 0.261  &\best 0.189\\
\hline
\multicolumn{6}{  c }{MIRFLICKR} \\ \hdashline
PCA+KSH &12500  &54260  &8  &0.384  &0.313\\
PCA+SPLH  &12500  &1054 &5  &0.445  &0.391\\
PCA+STHs  &12500  &1768 &5  &0.347  &0.301\\
CCA+ITQ &12500&699  &3  &0.519  &0.408\\
\best FastH &12500  &509  &7  &\best 0.595  &\best 0.558\\

  \hline \hline
  \end{tabular}
  }
\label{tab:pca}
\end{table}

\subsection{Comparison with dimension reduction}
A possible way to reduce the training cost on high-dimensional data is to apply dimension reduction.
For comparing methods: KSH, SPLH and STHs, here we reduce the original 11200-dimensional codebook features to 500 dimensions by applying PCA.
We also compare to CCA+ITQ \cite{gong2012iterative} which combines ITQ with the supervised dimensional reduction.
Our \fasth still use the original high-dimensional features.
The result is summarized in Table \ref{tab:pca}.
After dimension reduction, most comparing methods can be trained on the whole training set within 24 hours
(except KSH on CIFAR10).
However it still much slower than our \fasth.
The retrieval performance of most methods get improved with more training data.
Our \fasth still significantly outperforms all others.
The proposed decision tree hash functions in \fasth actually perform feature selection and hash function learning at the same time, which shows much better performance than other hashing method with dimensional reduction.

\subsection{Comparison with unsupervised methods}
We compare to some popular unsupervised hashing methods:
LSH \cite{Gionis1999},
ITQ \cite{gong2012iterative},
Anchor Graph Hashing
(AGH) \cite{liu2011hashingGraphs}, Spherical Hashing (SPHER) \cite{jae2012},
MDSH \cite{MDSH}.
The retrieval performance is shown in Fig.~\ref{fig:unsup}.
Unsupervised methods perform poorly for preserving label based similarity. Our \fasth outperforms others by a large margin.

\begin{table}[t]
\caption{Performance of our \fasth on more features (22400 dimensions) and more bits (1024 bits). It shows that \fasth can be efficiently trained on high-dimensional features with large bit length. The training and binary coding time (Test time) of \fasth is only linearly increased with the bit length.}
\centering
\resizebox{1\linewidth}{!}
  {
  \begin{tabular}{ l l l | l l c c}
    \hline \hline
Bits   &\#Train  &Features &Train time  &Test time  &Precision  &MAP  \\
\hline
\multicolumn{7}{ c}{CIFAR10}\\ \hdashline
64  &50000  &11200  &1794 &21 &0.763  &0.775  \\
256 &50000  &22400  &5588 &71 &0.794  &0.814  \\
1024  &50000  &22400  &22687  &282  &0.803  &0.826  \\
\hline
\multicolumn{7}{ c}{IAPRTC12} \\ \hdashline
64  &17665  &11200  &320  &9  &0.371  &0.276   \\
256 &17665  &22400  &1987 &33 &0.439  &0.314    \\
1024  &17665  &22400  &7432 &134  &0.483  &0.338   \\
\hline
\multicolumn{7}{ c}{ESPGAME} \\ \hdashline
64  &18689  &11200  &663  &9  &0.261  &0.189\\
256 &18689  &22400  &1912 &34 &0.329  &0.233\\
1024  &18689  &22400  &7689 &139  &0.373  &0.257\\
\hline
\multicolumn{7}{ c}{MIRFLICKR} \\ \hdashline
64  &12500  &11200  &509  &7  &0.595  &0.558\\
256 &12500  &22400  &1560 &28 &0.612  &0.567\\
1024  &12500  &22400  &6418 &105  &0.628  &0.576\\
\hline\hline
  \end{tabular}
  }
\label{tab:long_bits}
\end{table}

\subsection{More features and more bits}
We increase the codebook size to 1600 for generating higher dimensional features (22400 dimensions) and run up to 1024 bits.
The result is shown in Table \ref{tab:long_bits}.
It shows that our \fasth can be efficiently trained on high-dimensional features with large bit length. The training and binary coding time (Test time) of \fasth is only linearly increased with the bit length. 

\begin{table*}[t]
\caption{Results on the large image dataset SUN397 using
11200-dimensional codebook features.
Our \fasth can be efficiently trained to large bit length (1024 bits) on this large training set. Both of our \fasth and FastH-N outperform other methods by a large margin on retrieval performance.}
\centering
\resizebox{.9\linewidth}{!}
  {
  \begin{tabular}{ l l l | l l c c ||  l l l | l l c c}
    \hline \hline
Method  &\#Train  &Bits &Train time &Test time  &Precision  &MAP  &Method &\#Train  &Bits &Train time &Test time  &Precision  &MAP\\
\hline
\multicolumn{14}{ c}{SUN397} \\ \hdashline
KSH &10000  &64 &57045  &463  &0.034  &0.023  &ITQ  &100417 &1024 &1686 &127  &0.030  &0.021\\
BREs  &10000  &64 &105240 &23 &0.019  &0.013  &SPHER  &100417 &1024 &35954  &121  &0.039  &0.024\\
SPLH  &10000  &64 &27552  &14 &0.022  &0.015  &LSH  &100417 &1024 &99 &99 &0.028  &0.019\\
STHs  &10000  &64 &22914  &14 &0.010  &0.008  &CCA+ITQ  &100417 &1024 &15580  &127  &0.120  &0.081\\
CCA+ITQ &100417 &512  &7484 &66 &0.113  &0.076  &\best FastH  &100417 &1024 &62076  &536  &\best 0.165  &\best 0.163\\
\best FastH &100417 &512  &29624  &302  &0.149  &0.142  &\best FastH-N  &100417 &1024 &71203  &749  &\best 0.177  &\best 0.184\\
\hline\hline
  \end{tabular}
  }
\label{tab:sun}
\end{table*}

\begin{figure*}
    \centering

   \includegraphics[width=.22\linewidth]{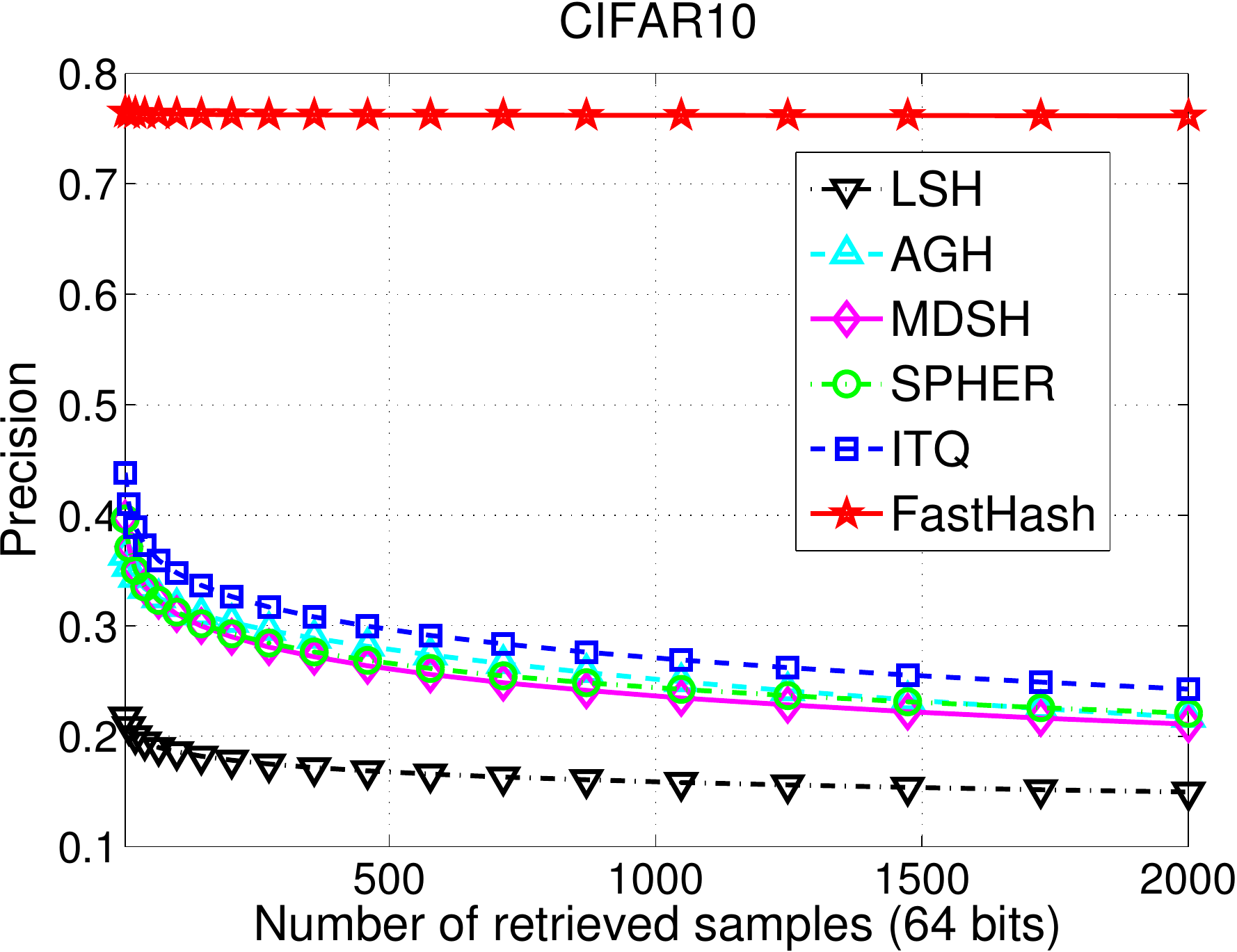}
   \includegraphics[width=.22\linewidth]{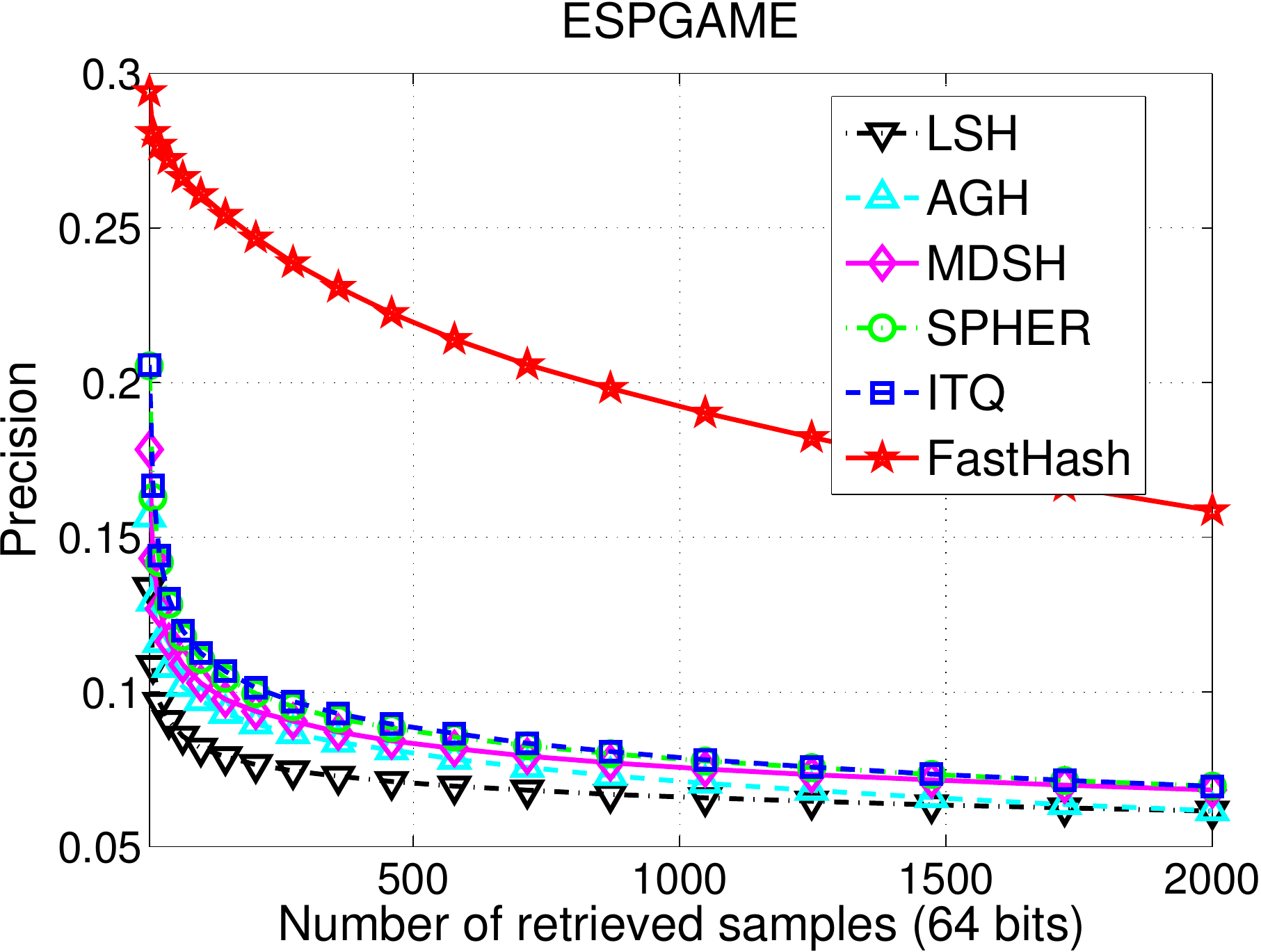}
   \includegraphics[width=.22\linewidth]{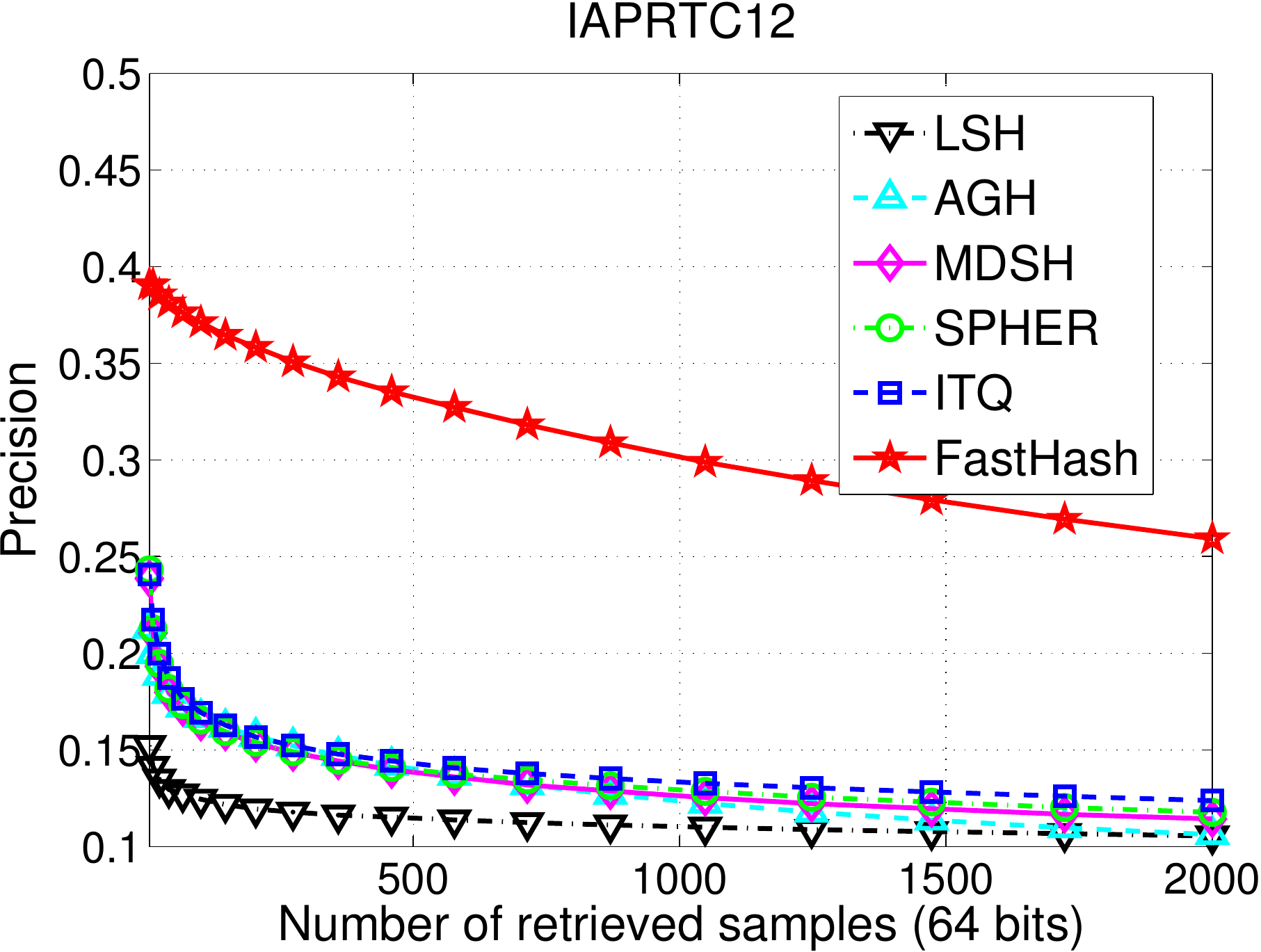}
   \includegraphics[width=.22\linewidth]{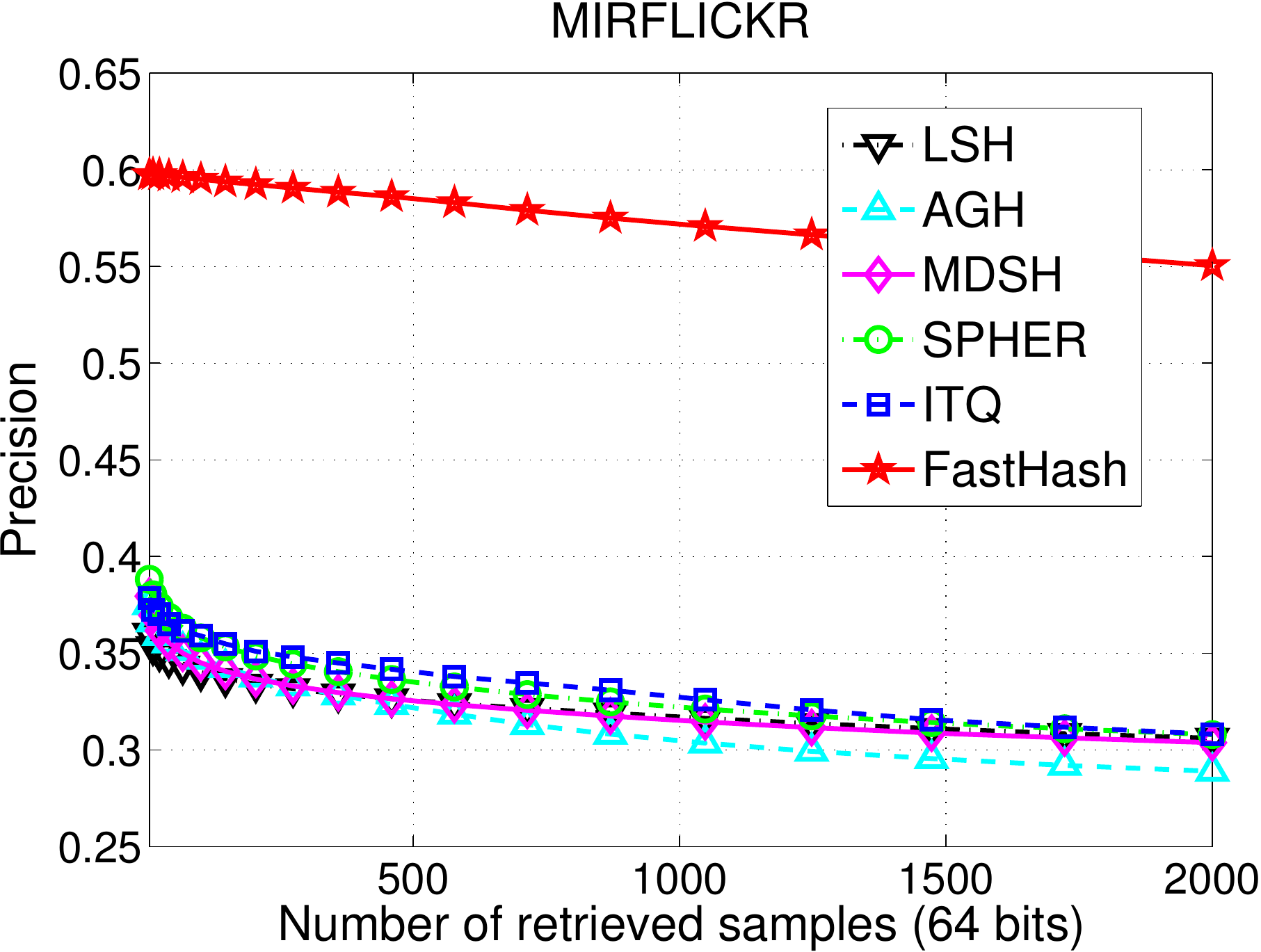}

    \caption{The retrieval precision results of unsupervised methods.  Unsupervised methods perform poorly for preserving label based similarity. Our \fasth outperform others by a large margin.}
    \label{fig:unsup}
\end{figure*}

\subsection{Large dataset: SUN397}
The challenging SUN397 dataset is a collection of more than $100,000$ scene images from 397 categories.
11200-dimensional codebook features are extracted on this dataset.
We compare with a number of supervised and unsupervised methods.
The depth for decision trees is set to 6.
The result is presented in Table \ref{tab:sun}
Supervised methods: KSH, BREs, SPLH and STHs are trained to 64 bits on a subset of 10K examples. 
However, even on this sampled training set and only run to 64 bits, 
the training of these methods are already impractically slow. It would be almost intractable for the whole training set and long bit length. Short length of bits are not able to achieve good performance on this challenging dataset.
In contrast, our method can be efficiently trained to large bit length (1024 bits) on the whole training set 
(more than $100,000$ training examples).
FastH-N is our \fasth using weighted sampling of examples (5000 examples) for tree node splitting.
FastH-N may consume more training time due to less pruning based on minimum node size.
Both of our \fasth and FastH-N outperform other methods by a large margin on retrieval performance.

For memory usage, many of the comparing methods require a large amount of memory for large matrix multiplication. In contrast, the decision tree learning in our method only involves the simple comparison operation on quantized feature data (256 bins), thus \fasth only consumes less than 7GB for training, which shows that our method can be easily applied for large-scale training.

\begin{figure}[t]
    \centering

   \includegraphics[width=.6\linewidth]{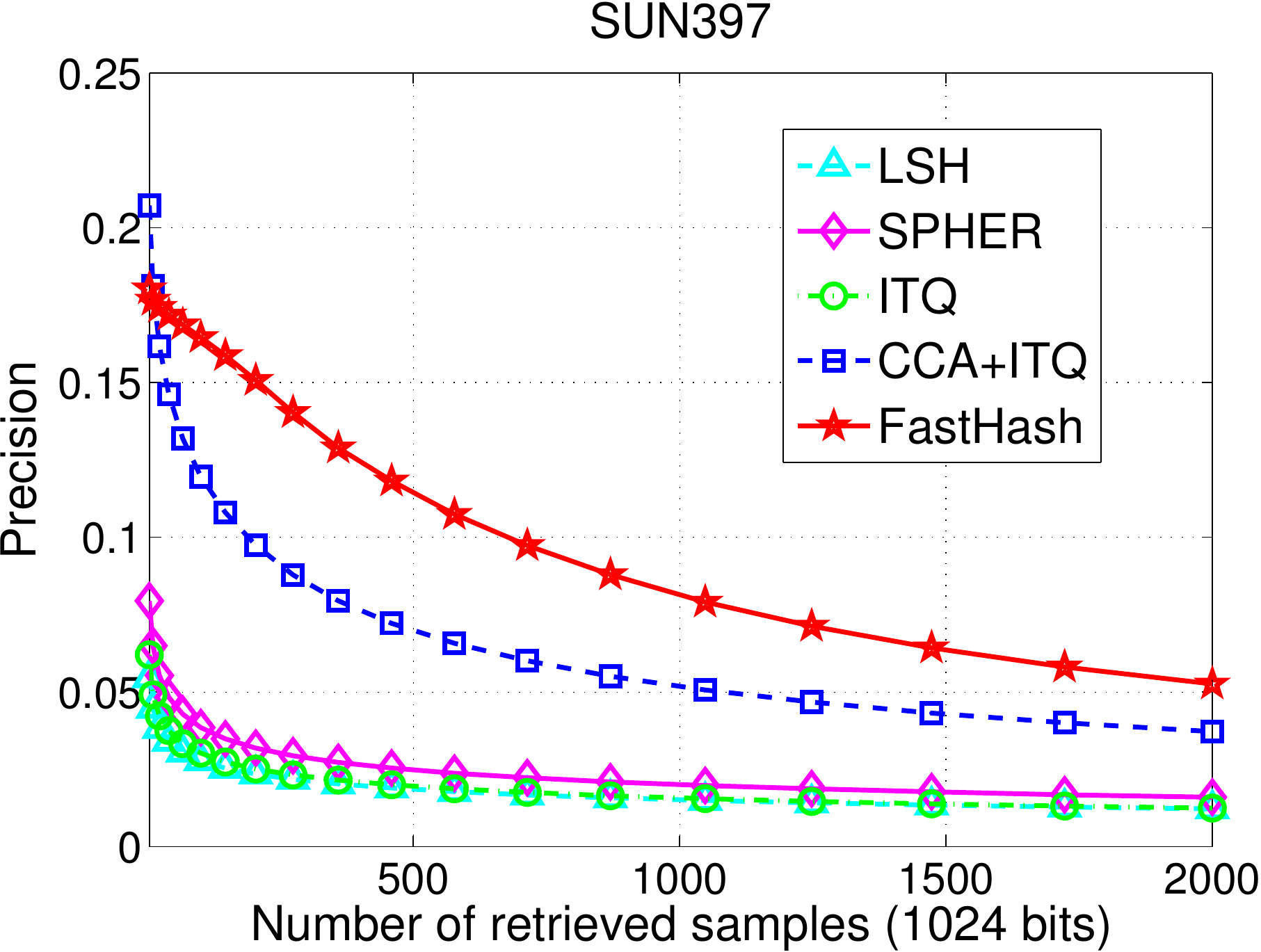}

    \caption{The precision curve of top 2000 retrieved examples on large image dataset SUN397 using 1024 bits.  Here we compare with those methods which can be efficiently trained up to 1024 bits on the whole training set. Our \fasth outperforms others by a large margin. %
    }
    \label{fig:sun}
\end{figure}

\section{Conclusion}
We have proposed an efficient supervised hashing method,
which uses decision tree based hash functions and GraphCut based binary code inference.
Our comprehensive experiments show the advantages of our method on retrieval performance and
fast training for high-dimensional data, which indicates its practical significance on
many potential  applications like large-scale image retrieval.

{\small
\bibliographystyle{ieee}
\bibliography{hash}
}

\end{document}